\DeclarePairedDelimiterX{\innerp}[1](){\innerpargs{#1}}
\NewDocumentCommand{\innerpargs}{>{\SplitArgument{1}{;}}m}
{\innerpargsaux#1}
\NewDocumentCommand{\innerpargsaux}{mm}
{\IfNoValueTF{#2}{#1}{#1\:\delimsize\Vert\:\mathopen{} #2}}
\newcolumntype{P}[1]{>{\centering\arraybackslash}p{#1}}
\newcolumntype{M}[1]{>{\centering\arraybackslash}m{#1}}
\newtheorem{thm}{Theorem}
\newtheorem{lemma}{Lemma}
\theoremstyle{definition}
\newenvironment{restatedthm}[1]{%
  \restatedthminner
}{\endrestatedthminner}
\title[Approximate Model-based Shielding in Continuous Environments]{Leveraging Approximate Model-based Shielding for Probabilistic Safety Guarantees in Continuous Environments}
\author{Alexander W. Goodall}
\affiliation{
  \institution{Imperial College London}
  \city{London}
  \country{United Kingdom}}
\email{a.goodall22@imperial.ac.uk}
\author{Francesco Belardinelli}
\affiliation{
  \institution{Imperial College London}
  \city{London}
  \country{United Kingdom}}
\email{francesco.belardinelli@imperial.ac.uk}
\begin{abstract}
Shielding is a popular technique for achieving safe reinforcement learning (RL). However, classical shielding approaches come with quite restrictive assumptions making them difficult to deploy in complex environments, particularly those with continuous state or action spaces. In this paper we extend the more versatile {\em approximate model-based shielding} (AMBS) framework to the continuous setting. In particular we use {\em Safety Gym} as our test-bed, allowing for a more direct comparison of AMBS with popular constrained RL algorithms. We also provide strong probabilistic safety guarantees for the continuous setting. In addition, we propose two novel penalty techniques that directly modify the policy gradient, which empirically provide more stable convergence in our experiments. 
\end{abstract}
\keywords{Reinforcement Learning; Shielding; Continuous Environments}
\newcommand{\BibTeX}{\rm B\kern-.05em{\sc i\kern-.025em b}\kern-.08em\TeX}
\begin{document}


\pagestyle{fancy}
\fancyhead{}


\maketitle 



\section{Introduction}
\label{sec:introduction}
The construction of optimal controllers and agents is challenging for all but the simplest of environments. {\em Reinforcement learning} (RL) is a principled and convenient framework that allows practitioners to train optimal agents by specifying a reward function to maximise. However, while agents trained with RL can achieve near optimal performance in expectation, they come with no guarantees on worst-case performance. Derived from formal methods, {\em shielding} \cite{alshiekh2018safe} is a popular method for ensuring the safety of RL systems, that comes with strong safety guarantees. In general, shielding is a policy agnostic approach, i.e.~the safety of the system does not rely on the learned RL policy. This is a key advantage over constrained RL approaches \cite{achiam2017constrained, ray2019benchmarking} that rely on the assumption of converge to guarantee safety. 

Shielding is a correct by construction approach. The shield itself prevents learned policies from entering unsafe states defined by some temporal logic formula, typically expressed in {\em Linear Temporal Logic} (LTL) \cite{baier2008principles} or similar.  The shield can be applied preemptively, modifying the action space of the agent to include only those actions verified safe. Alternatively, the shield can be applied in a post-posed fashion, rejecting actions proposed by the agent until a safe action is proposed. However, without additional penalty techniques \cite{alshiekh2018safe, yang2023safe} these rejection based shields can harm convergence to the optimal policy, as the agent is unaware that their actions are being overridden. 

To obtain a shield we must first construct a safety automaton that encodes the given temporal logic formula; a safety game \cite{bloem2015shield} is then constructed and solved to synthesise the shield. This procedure typically requires a priori access to the safety-relevant dynamics of the system. This is a key limitation of shielding, as in many real-world systems the environment is not fully known, or too complex to represent as a simple low-level abstraction. Instead we leverage {\em Approximate Model-based Shielding} (AMBS) \cite{goodallb2023approximate}, which can be applied in more realistic settings, where the safety-relevant dynamics are not known in advance. 

\begin{figure}[t]
    \centering
    \includegraphics[width=.46\textwidth]{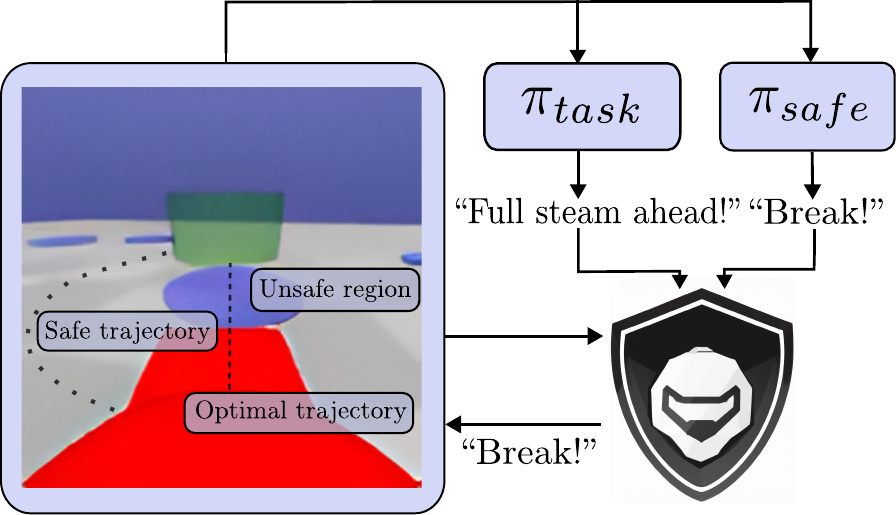}
    \caption{A simple example in Safety Gym \cite{ray2019benchmarking}. The task policy proposes actions along the optimal trajectory. However, this trajectory enters an unsafe region and so the shield overrides these actions with {\em ``Break!''} actions
    proposed by the safe policy. As a result, the safe trajectory is not followed and the two policies continuously fight for control. 
    }
    \label{fig:naiveambs}
\end{figure}

AMBS is a model-based RL algorithm and a general framework for shielding learned RL policies by simulating possible futures in the latent space of a learned dynamics model or {\em world model} \cite{ha2018world}. AMBS is an approximate method that relies on learnt components and Monte Carlo sampling, as such it cannot provide the same strong safety guarantees that classical shielding methods can. That being said, strong probabilistic guarantees have been developed for AMBS, particularly in the tabular case. In our implementation of AMBS we leverage DreamerV3 \cite{hafner2023mastering} as the stand-in dynamics model, used for policy optimisation and shielding.

Na\"ively applying AMBS to Safety Gym \cite{ray2019benchmarking}, results in a high-variance return distribution, see Appendix \ref{sec:additionalresults}. We hypothesise that this phenomena can be explained by the reward policy fighting with the shield to gain control of the system, see Fig.~\ref{fig:naiveambs}. We alleviate this problem by providing the underlying (unshielded) policy with some safety information by using a simple penalty critic (abbrv.~PENL). We also introduce two more sophisticated penalty methods, the first based on {\em Probabilistic Logic Shielding} \cite{yang2023safe} (abbrv.~PLPG), and the second loosely inspired by counter-examples (abbrv.~COPT), a familiar concept from model checking \cite{baier2008principles} and verification \cite{10.1007/10722167_15}. There has been some success in leveraging counter-examples for safe RL \cite{gangopadhyay2023counterexample, ji2023probabilistic}. However, our method does differ from these previous approaches. 

\paragraph{Contributions} We summarise our contributions: (1) we extend and apply AMBS \cite{goodallb2023approximate} to the continuous setting, specifically we use Safety Gym \cite{ray2019benchmarking} to obtain a meaningful comparison with other model-based and safety-aware algorithms. (2) We subtly build on the probabilistic safety guarantees of AMBS, by establishing the same sample complexity bounds for continuous state and action spaces. 
(3) We demonstrate that extending AMBS with safety information is crucial for stable convergence to a safe policy. In addition, we introduce two novel penalty methods that empirically improve the stability of the learned policy in the later stages of training. (4) In our experiments we demonstrate that our extended version of AMBS dramatically reduces the total number of safety-violations, compared to other safety-aware RL algorithms, while maintaining good convergence properties and performance w.r.t.~episode return.



\section{Preliminaries}

In this section we describe the problem setup and introduce {\em bounded safety}, an important property that has been formalised in previous work \cite{giacobbe2021shielding, goodalla2023approximate, goodallb2023approximate}. We note that this is also very similar to {\em safety for a finite horizon} in \cite{jansen2020safe}. In addition, we also provide a detailed overview of AMBS \cite{goodallb2023approximate}.

\subsection{Problem Setup}

In Safety Gym \cite{ray2019benchmarking} the agent can learn from low-dimensional lidar observations or high-dimensional visual input. To obtain a meaningful comparison with key prior works \cite{as2022constrained, huang2023safe}, we opt for the high-dimensional visual input setting 
As such, we model the environment as a {\em partially observable Markov decision process} (POMDP) \cite{puterman1990markov}, defined as
a tuple $\mathcal{M} = \langle S, A, p, \iota_{init}, R, \gamma, \Omega, O, AP, L \rangle$, where $S$ and $A$ are the {\em state} and {\em action} spaces respectively; $p : S \times A \times S \rightarrow [0, 1]$ is the {\em transition function}, where $p(s' \mid s, a)$ denotes the probability of transitioning to the new state $s'$ by taking action $a$ from state $s$; $\iota_{init} : S \rightarrow [0, 1]$ is the {\em initial state distribution}; $R : S \times A \rightarrow \mathbb{R}$ is the {\em instantaneous reward function}, $\gamma \in (0, 1]$ is the {\em discount factor}, $\Omega$ is the set of {\em observations}, with $O : S \times \Omega \rightarrow [0, 1]$ as the {\em observation function}. In addition, to the standard POMDP elements, we introduce a set of {\em atomic propositions} (or atoms) $AP$ and a {\em labelling function} $L : S \to 2^{AP}$. This extension of the typical RL framework is common in the model checking literature \cite{baier2008principles}. 

Strictly speaking, POMDPs are not equipped to reason about safety. A typical way of framing the safe RL problem is with the constrained Markov decision process (CMDP) \cite{altman1999constrained}, which introduces the idea of a cost function set and a set of constraints which typically constrain the undiscounted (or discounted) accumulated costs below a given threshold. We however, follow a slightly less conventional setup used in previous work \cite{goodalla2023approximate,goodallb2023approximate}, although the two are closely related.

Specifically, we are given a {\em propositional safety-formula} $\Psi$ that encodes the safety constraints of the environment. To provide more context we present the following simple example from \cite{goodallb2023approximate}. For a simple self-driving car environment we might consider the following safety-formula,
\begin{equation*}
    \Psi = \neg collision \land (red\_light \Rightarrow stop)
\end{equation*}
which intuitively says ``don't have a collision and stop when there is a red light'', for the set of atoms $AP = \{ collision, red\_light, stop\}$. At each discrete timestep $t$, the agent receives an observation $o_t$, reward $r_t$ and a set of labels $L(s_t)$, where $L(s_t) \in 2^{AP}$ denotes the set of atoms that hold in the unobservable current state $s_t$, see Fig.~\ref{fig:pomdplabels}. From the set of labels $L(s_t)$ the agent can then determine whether the current state $s_t$ satisfies the safety-formula $\Psi$ by applying the following satisfaction relation, 
\begin{equation*}
    \begin{array}{@{}r@{{}\mathrel{}}c@{\mathrel{}{}}l@{}}
    s \models a & \text{iff} & a \in L(s)\\
    s \models \neg \Psi & \text{iff} & s \not \models \Psi\\
    s \models \Psi_1 \land \Psi_2 & \text{iff} & s \models \Psi_1 \text{ and } s \models \Psi_2
    \end{array}
\end{equation*}
where $a \in AP$ is an atom, negation ($\neg$) and conjunction ($\land$) are the familiar logical operators from propositional logic.
\begin{figure}[t]
    \centering
    \includegraphics[width=.47\textwidth]{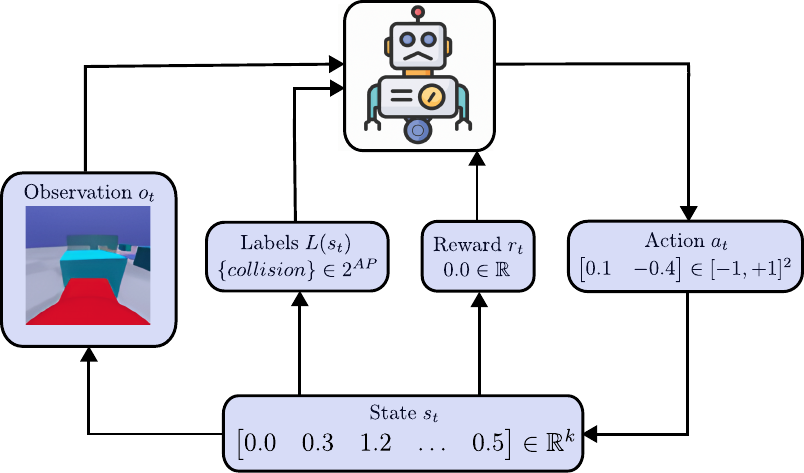}
    \caption{POMDP with Labels. 
    }
    \label{fig:pomdplabels}
\end{figure}
In this setup the goal is to find the optimal policy w.r.t.~reward, that is, $\pi^* = \arg\max_{\pi} \mathbb{E}[\sum_{t=1}^{\infty} \gamma^{t-1} \cdot r_t]$, while minimising the total number of violations of the safety-formula $\Psi$ during training and deployment. However, this may be infeasible, for example, if the optimal policy $\pi^*$ is not `safe' and frequently violates $\Psi$. 

Instead we have a preference for policies that achieve the minimum number of safety-violations over those that achieve high reward. In the CMDP framework, this goal can be interpreted as constraining all accumulated costs to zero (or the corresponding minimal value of the environment), and obtaining the optimal policy from the corresponding feasible set $\Pi_{C}$, i.e., $\pi^* = \arg\max_{\pi \in \Pi_{C}} \mathbb{E}[\sum_{t=1}^{\infty} \gamma^{t-1} \cdot r_t]$. 


In the original AMBS paper \cite{goodallb2023approximate} there was an implicit assumption that the globally optimal policy w.r.t.~reward was `safe' (i.e., achieved zero safety-violations), this is certainly not the case for Safety Gym \cite{ray2019benchmarking} where there is a trade-off between reward and constraint satisfaction. Rather than explicitly formalise this new goal as a CMDP, we use the original setup as it will be particularly useful for reasoning about safety in the subsequent sections. 


\subsection{Bounded Safety}

Bounded safety is a policy level property introduced in \cite{giacobbe2021shielding}, which formally states the safety requirements for a learned RL policy to be deemed `safe'. Bounded safety can also be interpreted as a property over sequences of states or state-action pairs. In \cite{goodallb2023approximate} a probabilistic interpretation of bounded safety, {\em $\Delta$-bounded safety}, was introduced as a temporal property defined with {\em Probabilistic Computation Tree Logic} (PCTL) \cite{baier2008principles}. We start by introducing PCTL.

PCTL is a popular specification language for discrete stochastic systems, that is typically used to specify soft reachability and probabilistic safety properties. The syntax of state formulas $\Phi$ and path formulas $\phi$ in PCTL is defined as follows,
\begin{eqnarray*}
    \Phi & ::= & \text{true} \mid a \mid \neg \Phi \mid \Phi \land \Phi \mid \mathbb{P}_{J}(\phi) \\
    \phi & ::= & X \Phi \mid \Phi U \Phi \mid \Phi U^{\leq n} \Phi
\end{eqnarray*}
where once again $a \in AP$ is an atomic proposition;  negation ($\neg$) and conjunction ($\land$) are the familiar logical operators from propositional logic; $J \subset [0, 1]$, $J \neq \emptyset$ is a non-empty subset of the unit interval, and next $X$, until $U$ and bounded until $U^{\leq n}$ are the base temporal operators from CTL \cite{baier2008principles}. A PCTL property is defined by a state formula $\Phi$ and we say that $s \models \Phi$ iff $s$ satisfies the PCTL property of $\Phi$. On the other hand, path formulas $\phi$ state properties over paths or sequences of states $s_0, s_1, s_2, \ldots$, we note that state formula $\Phi$ are related to path formula $\phi$ by probabilistic quantification, e.g.~$\Phi = \mathbb{P}_{\geq 0.99}(\phi)$, a state $s$ satisfies $\Phi$ if the path formula $\phi$ holds with probability at least 0.99 from $s$. For additional details, e.g.~the satisfaction relation, please refer to \cite{baier2008principles}. 

From this definition we also note that the common temporal operators eventually ($\Diamond$) and always ($\square$), and their bounded counterparts ($\Diamond^{\leq n}$ and  $\square^{\leq n}$) can be derived as $\Diamond \Phi ::= \text{true} \, U \Phi$ and $\Box \Phi ::= \neg \Diamond \neg  \Phi$ (resp.~$\Diamond^{\leq n} \Phi ::= \text{true} \,U^{\leq n} \Phi$ and $\Box^{\leq n} \Phi ::= \neg  \Diamond^{\leq n} \neg  \Phi$). 

We are now equipped to formalise {\em bounded safety}. Consider some fixed (stochastic) policy $\pi $ and POMDP (with labels) $\mathcal{M} = (S, A, p, \iota_{init}, R, \Omega, O, AP, L)$. We will see shortly that this policy $\pi$ operates on belief states or state estimates $\hat s$. Nevertheless, $\pi$ and $\mathcal{M}$ define a transition system $\mathcal{T} : S \times S \to [0, 1]$, where for all $s$, $\int_{s' \in S} \mathcal{T}(s' \mid s) ds' = 1$. A {\em trace} (or path) of $\mathcal{T}$ denoted $\tau$ is a sequence of states $s_0 \to s_1 \to s_2 \to \ldots \to s_n \to \ldots$. A finite trace is an $n$-prefix of a trace $\tau$, with $n < \infty$, we denote as $\tau[i]$ the $i^{\text{th}}$ state of the trace $\tau$. In our setup, a trace $\tau$ satisfies bounded safety if all of its states $s$ satisfy the propositional safety-formula $\Psi$, or formally, 
\begin{eqnarray*}
    \tau \models \square^{\leq n} \Psi & \text{iff} & \ \forall i \; 0 \leq i \leq n, \tau[i] \models \Psi
    \label{eq:boundedsafety}
\end{eqnarray*}
for some bounded look-ahead parameter $n$. Now we define the state level property {\em $\Delta$-bounded safety}. In PCTL a given state $s \in S$ satisfies $\Delta$-bounded safety, or formally, $s \models \mathbb{P}_{\geq 1 - \Delta}(\square^{\leq n} \Psi )$, iff, 
\begin{equation}
   \mu_s(\{\tau \mid \tau[0] = s, \forall i \; 0 \leq i \leq n, \tau[i] \models \Psi\}) \in [1- \Delta, 1] \label{eq:deltaboundedsafety}
\end{equation}
where $\mu_s$ is a is a well-defined probability measure on the set of traces from $s$, induced by the transition probabilities of $\mathcal{T}$, see \cite{baier2008principles} for details. In words, a given state $s \in S$ satisfies $\Delta$-bounded safety iff the proportion (or probability) of satisfying traces from $s$ is greater than $1 - \Delta$. For the rest of this paper we will use $\mu_{s\models \phi}$ to denote the measure $ \mu_s(\{\tau \mid \tau[0] = s, \forall i \;0 \leq i \leq n, \tau[i] \models \Psi\})$.

In principle, the parameter $\Delta$ is used to trade-off safety and exploration, with greater $\Delta$ corresponding to more risky behaviour and vice versa. If we consider $\Delta=0$ (100\% safety) we quickly arrive at the policy level property introduced in \cite{giacobbe2021shielding}. However, checking this level of safety may require the enumeration of all paths of length $n$, which can be quite computationally demanding. The advantage of considering $\Delta>0$, is that we can estimate the measure $\mu_{s\models \phi}$ by Monte Carlo sampling, and check $\Delta$-bounded safety with high probability.

\subsection{Approximate Model-based Shielding}

Inspired by {\em latent shielding} \cite{he2021androids},  Approximate Model-based Shielding (AMBS) \cite{goodallb2023approximate} is a general purpose framework for verifying the safety of learned RL policies in the latent space of a world model \cite{ha2018world}. The core idea of AMBS is to remove the requirement of access to the safety-relevant dynamics of the environment, by instead learning an approximate dynamics model. We can then shield the learned policy at each timestep, by sampling traces from the dynamics model and estimating the probability of a safety-violation in the near future. If this probability is too high (i.e.~above a predefined threshold) then we can override the action proposed by the agent and sample an action from the {\em backup policy} instead. 

We leverage DreamerV3 \cite{hafner2023mastering} as the stand-in dynamics model for AMBS, as it is a particularly flexible state-of-the-art single-GPU model-based RL algorithm. DreamerV3 is based on the {\em Recurrent State Space Model} (RSSM) \cite{hafner2019learning}, a special type of sequential {\em Variational Auto-encoder} (VAE) \cite{kingma2013auto}. We summarise the key model components as follows:
\begin{itemize}
    \item Sequential model: $h_t = f_{\theta}(h_{t-1}, z_{t-1},a_{t-1})$
    \item Image / observation encoder: $z_t \sim q_{\theta}(z_t \mid o_t, h_t)$
    \item Transition predictor (prior): $\hat z_t \sim p_{\theta}(\hat z_t \mid h_t)$
    \item Image/observation decoder $\hat o_t \sim p_{\theta}(\hat o_t \mid h_t, z_t)$
    \item Reward predictor: $\hat r_t \sim p_{\theta}(\hat r_t \mid h_t,z_t )$
    \item Episode termination predictor: $\hat \gamma_t \sim p_{\theta}(\hat \gamma_t \mid h_t, z_t)$
    \item Cost predictor: $\hat c_t \sim p_{\theta}(\hat c_t \mid h_t, z_t)$
\end{itemize}
The RSSM learns a compact latent representation of the environment state by minimising the reconstruction loss between observations $o_t$ and reconstructed targets $\hat o_t$, output by the image / observation decoder. The recurrent state denoted $h_t$ is a function of the previous recurrent state $h_{t-1}$, stochastic latents $z_{t-1}$ and action $a_{t-1}$. The transition predictor uses the recurrent state to predict prior samples $\hat z_t$ which correspond to the stochastic latents. $z_t$ denotes posterior samples that are obtained when the current observation $o_t$ is available as input to the model. By minimising the divergence between the prior and posterior samples (i.e.~$\hat z_t$ and $z_t$ resp.), the model should have the capability to accurately predict future latent states for long horizons. Where $\hat s_t = ( z_t, h_t )$ is the latent state of the model. 

In addition to the standard DreamerV3 components \cite{hafner2023mastering}, AMBS utilises a cost predictor $\hat c_t \sim p_{\theta}(\hat c_t \mid h_t, z_t)$, to provide TD-$\lambda$ targets for the backup policy, which is trained specifically to minimise costs, rather than (maximise) rewards. The cost predictor along with an optional safety critic, are used in the shielding procedure to check whether traces sampled from the world model satisfy bounded safety. We make the distinction here between the task policy $\pi_{\text{task}}$ trained to maximise rewards and the backup policy $\pi_{\text{safe}}$ which is trained to minimise costs. We note that in all but the simplest scenarios the safe policy cannot be completely hand-engineered and must be learnt.

To obtain a shielded policy, AMBS checks that $\Delta$-bounded safety holds from the current state $s_t$, i.e., $s_t \models \mathbb{P}_{\geq 1 -  \Delta}(\square^{\leq n} \Psi )$, under the transition probabilities induced by $\mathcal{T}: S \times S \to [0, 1]$ (where $\mathcal{T}$ is the transition system obtained by fixing the task policy $\pi_{\text{task}}$ in the POMDP $\mathcal{M}$). More specifically, we estimate the measure $\mu_{s_t \models \phi}$ and check if $\mu_{s_t \models \phi} \in [1 - \Delta, 1]$. The shielded policy is then obtained as follows: if $\Delta$-bounded safety holds from the current state then play the action proposed by the task policy $\pi_{\text{task}}$, otherwise sample an action $a \sim \pi_{\text{safe}}$ from the backup policy. In the first case we can be sure to remain safe with probability $\geq 1 - \Delta$, in the second case we can be sure to remain safe provided that the backup policy $\pi_{\text{safe}}$ has converged. 

In practice, we cannot assume access to the true transition system $\mathcal{T}$, rather we obtain an approximation $\widehat{\mathcal{T}} \approx \mathcal{T}$ by rolling out the task policy $\pi_{\text{task}}$ in the latent space of the learned world model $p_{\theta}$. More formally, by fixing $\pi_{\text{task}}$ and $p_{\theta}$, we obtain an approximate transition system $\widehat{\mathcal{T}}: S \times S \to [0, 1]$, that operates on the same set of abstract states $S$. Under reasonable assumptions, and by sampling sufficiently many traces from $\widehat{\mathcal{T}}$, AMBS provides a straightforward method for obtaining an $\epsilon$-approximate estimate of the measure $\mu_{s_t \models \phi}$. We provide the corresponding sample complexity bound for this procedure in Sec.~\ref{sec:safetyguarantees}. 

To check that a given trace $\tau$ satisfies bounded safety we use the cost predictor to compute the discounted cost of the trace, we let $\textrm{cost}(\tau) = \sum^n_{t=1} \gamma^{t-1} \cdot \hat c_t $ denote the cost of a trace. Safety critics, which estimate the cost value function $\mathbb{E}_{\pi_{\text{task}}}[ \sum^{\infty}_{t=1} \gamma^{t-1} \cdot c_t ]$ of the task policy, can be used to bootstrap the cost estimates for longer-horizon shielding capabilities. If the cost of the trace 
is $< \gamma^{n-1} \cdot C$, where $c_t = 0 \text{ if } s_t \models \Psi \text{ and } c_t = C \text{ otherwise}$, then we say that $\tau$ satisfies bounded safety (i.e., $\tau \models \square^{\leq n} \Psi$). By taking several samples and computing the proportion of satisfying traces we obtain an estimate for $\mu_{s_t \models \phi}$. Since AMBS only guarantees an $\epsilon$-approximate estimate measure $\mu_{s_t \models \phi}$, we typically check that the estimate $\tilde \mu_{s_t \models \phi}$ lies in the interval $[1 - \Delta + \epsilon, 1]$, which guarantees that $\Delta$-bounded safety is satisfied (with high probability). The full learning algorithm and shielding procedure are detailed in Appendix \ref{sec:algorithms} and we refer the reader to \cite{goodallb2023approximate} for additional justification and proof details. 


\section{Safety Guarantees}
\label{sec:safetyguarantees}

In this section we establish probabilistic safety guarantees for AMBS \cite{goodallb2023approximate} in the continuous setting. We present Theorem 1 from \cite{goodallb2023approximate}, which develops a sample complexity bound for the true transition system with full state observability, the proof in the continuous case is immediate. For the approximate transition system we establish a similar sample complexity bound. The proof of this is a simple extension of the original theorem from \cite{goodallb2023approximate}, that leverages Pinsker's inequality \cite{tsybakov2004introduction}. We conclude this section by considering the partially observable case and we discuss when the bounds for the fully observable case may become useful. Complete proofs are provided in Appendix \ref{sec:proofs}. 

\subsection{Full Observability}
\label{sec:fullobservability}
We first consider of Markov decision processes (MDPs), i.e.~full observability. An MDP is a tuple $\mathcal{M} = \langle S, A, p, \iota_{init}, R, \gamma, AP, L \rangle$. We can think of an MDP as a special case of POMDP where the observation set $\Omega = S$ and the observation function $O : \Omega \times S \to [0,1]$ is the identity. In a similar way as before, consider some fixed (stochastic) policy $\pi : S \times A \to [0, 1]$ and MDP (with labels) $\mathcal{M} = (S, A, p, \iota_{init}, R, AP, L)$. Together $\pi$ and $\mathcal{M}$ define a transition system $\mathcal{T} : S \times S \to [0, 1]$, where for all $s$, $\int_{s' \in S} \mathcal{T}(s' \mid s) ds' = 1$. We restate the following result from \cite{goodallb2023approximate}.
\begin{thm}
Let $\epsilon > 0$, $\delta > 0$, $s \in S$ be given. With access to the true transition system $\mathcal{T}$, with probability $1 - \delta$ we can obtain an $\epsilon$-approximate estimate of the measure $\mu_{s\models\phi}$, by sampling $m$ traces $\tau \sim \mathcal{T}$, provided that,
\begin{equation}
    m \geq \frac{1}{2\epsilon^2} \log\left(\frac{2}{\delta}\right) \label{eq:boundonm}
\end{equation}
\label{prop:boundonm}
\end{thm}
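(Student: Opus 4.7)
The plan is to reduce the claim to a single application of Hoeffding's inequality, since the only nontrivial object---the probability measure $\mu_s$ on infinite traces induced by $\mathcal{T}$---is assumed to be well-defined in the continuous setting via the standard cylinder-set construction cited in the preliminaries. For each of the $m$ sampled traces $\tau_i \sim \mathcal{T}$ starting from $s$, I would introduce the Bernoulli indicator
\[
    X_i \;=\; \mathbbm{1}\!\left[\,\forall\, 0 \le j \le n,\ \tau_i[j] \models \Psi\,\right].
\]
By construction the $X_i$ are i.i.d., take values in $\{0,1\} \subset [0,1]$, and have common mean $\mathbb{E}[X_i] = \mu_{s\models\phi}$ by Equation~\eqref{eq:deltaboundedsafety}. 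The natural estimator is then the empirical mean $\tilde{\mu}_{s\models\phi} = \tfrac{1}{m}\sum_{i=1}^{m} X_i$.

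Next, I would apply Hoeffding's inequality to $X_1, \ldots, X_m$, which yields
\[
    \Pr\!\left(\,\left|\tilde{\mu}_{s\models\phi} - \mu_{s\models\phi}\right| \,>\, \epsilon\,\right) \;\le\; 2 \exp(-2 m \epsilon^2).
\]
Forcing the right-hand side to be at most $\delta$ and solving for $m$ recovers the stated bound $m \geq \frac{1}{2\epsilon^2}\log(2/\delta)$, which is precisely Equation~\eqref{eq:boundonm}. Under this condition, with probability at least $1 - \delta$ the estimator satisfies $|\tilde{\mu}_{s\models\phi} - \mu_{s\models\phi}| \le \epsilon$, i.e., it is an $\epsilon$-approximate estimate of the true measure.

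The reason the continuous case is essentially free---and why the authors describe the proof as \emph{immediate}---is that Hoeffding's inequality makes no appeal to the cardinality of the underlying state space: it only requires i.i.d.\ sampling and a uniformly bounded range. The sole measurability concern is that each event $\{\tau[j] \models \Psi\}$ should be $\mu_s$-measurable for $j \le n$; this is inherited from the implicit standing assumption that $L^{-1}(\{A\}) \subseteq S$ is measurable for every $A \subseteq AP$, together with the fact that $\Psi$ is a finite propositional combination of atoms (so the preimage $\{s : s \models \Psi\}$ is a finite Boolean combination of measurable sets). Consequently, I do not expect any step to present a real obstacle; the only substantive effect of the continuous extension over the tabular case is the verification that the indicator random variables remain well-defined on $(S^{\mathbb{N}}, \mu_s)$, and this is handled purely at the measure-theoretic level, with none of the concentration machinery needing to change.
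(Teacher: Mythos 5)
Your proposal is correct and matches the paper's own proof essentially step for step: the same indicator variables for trace satisfaction, the same Hoeffding bound $2\exp(-2m\epsilon^2)$, and the same rearrangement to obtain the sample complexity. Your added remark on measurability of the satisfaction events in the continuous setting is a reasonable elaboration of what the paper treats as immediate, but it does not change the argument.
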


Theorem \ref{prop:boundonm} outlines the number of samples required to estimate the measure $\mu_{s\models\phi}$ up to $\epsilon$-error, with high probability ($1 - \delta$). However this result assumes we can sample directly from the true transition system $\mathcal{T}$. Instead we consider the more realistic case, where we no longer have access to $\mathcal{T}$, but rather an approximate transition system $\widehat{\mathcal{T}}$. Again more formally, we consider some fixed (stochastic) policy $\pi : S \times A \to [0, 1]$ and a learned approximation $p_{\theta}$ of the MDP dynamics. Together $\pi$ and  $p_{\theta}$ define the approximate transition system $\widehat{\mathcal{T}} : S \times S \to [0, 1]$, where for all $s$, $\int_{s' \in S} \widehat{\mathcal{T}}(s' \mid s) ds' = 1$. We state the following new result.

\begin{thm}
Suppose that for all $s \in S$, the Kullback-Leibler (KL) divergence\footnote{The KL divergence between two distributions $p$ and $q$ is defined as $D_{KL}\innerp*{p ; q} = \int_{\mathcal{X}} p(x) \log\frac{p(x)}{q(x)} dx$, for some (possibly uncountable infinite) set $\mathcal{X}$.} between the distributions $\mathcal{T}(\cdot \mid s)$
and $\widehat{\mathcal{T}}(\cdot \mid s)$ is upper-bounded by some $\alpha \leq \epsilon^2/(2 n^2)$. That is,
\begin{equation}
	D_{KL}\innerp*{\mathcal{T}(\cdot \mid s) ; \widehat{\mathcal{T}}(\cdot \mid s) } \leq \alpha \; \forall s \in S 
 \label{eq:tvdistance}
\end{equation}
Now fix an $s \in S$ and let $\epsilon > 0$, $\delta> 0$ be given. With probability $1 - \delta$ we can obtain an $\epsilon$-approximate estimate of the measure $\mu_{s\models\phi}$, by sampling $m$ traces $\tau \sim \widehat{\mathcal{T}}$, provided that,
\begin{equation}
    m \geq \frac{2}{\epsilon^2} \log\left(\frac{2}{\delta}\right)
    \label{eq:boundonmbig}
\end{equation}
\label{prop:kl}
\end{thm}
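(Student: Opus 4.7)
The plan is to split the error $|\tilde\mu_{s\models\phi} - \mu_{s\models\phi}|$ via the triangle inequality into a Monte Carlo component (the empirical frequency vs.\ the true probability under $\widehat{\mathcal{T}}$) and a model-bias component (the true probabilities under $\widehat{\mathcal{T}}$ vs.\ under $\mathcal{T}$), and to show each is at most $\epsilon/2$, so their sum is at most $\epsilon$ with probability at least $1-\delta$.

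First I would handle the statistical term. Let $\hat\mu_{s\models\phi}$ denote the true probability under $\widehat{\mathcal{T}}$ that a trace from $s$ satisfies $\square^{\leq n}\Psi$. The $m$ sampled traces induce i.i.d.\ Bernoulli indicators $X_i = \mathbbm{1}\{\tau^{(i)} \models \square^{\leq n}\Psi\}$ with common mean $\hat\mu_{s\models\phi}$, so Hoeffding's inequality applied to $\tilde\mu = \frac{1}{m}\sum_i X_i$ with tolerance $\epsilon/2$ gives $\mathbb{P}(|\tilde\mu - \hat\mu_{s\models\phi}| > \epsilon/2) \leq 2\exp(-m\epsilon^2/2)$. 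Requiring the right-hand side to be at most $\delta$ rearranges to exactly the sample-complexity bound in (\ref{eq:boundonmbig}), i.e.\ $m \geq (2/\epsilon^2)\log(2/\delta)$.

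Second I would bound the model bias $|\hat\mu_{s\models\phi} - \mu_{s\models\phi}|$. Let $\mathbb{P}^n_\mathcal{T}$ and $\mathbb{P}^n_{\widehat{\mathcal{T}}}$ be the distributions on $n$-step traces induced by $\mathcal{T}$ and $\widehat{\mathcal{T}}$ starting from $s$. Since $\{\tau \models \square^{\leq n}\Psi\}$ is measurable with respect to the $n$-step prefix, the bias is at most $D_{TV}(\mathbb{P}^n_\mathcal{T}, \mathbb{P}^n_{\widehat{\mathcal{T}}})$. The chain rule for KL divergence over Markov chains with a shared initial state gives
$$D_{KL}\innerp*{\mathbb{P}^n_\mathcal{T} ; \mathbb{P}^n_{\widehat{\mathcal{T}}}} = \sum_{t=0}^{n-1} \mathbb{E}_{s_t \sim \mathbb{P}^t_\mathcal{T}}\bigl[D_{KL}\innerp*{\mathcal{T}(\cdot\mid s_t) ; \widehat{\mathcal{T}}(\cdot\mid s_t)}\bigr] \leq n\alpha,$$
using the uniform per-step bound (\ref{eq:tvdistance}). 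Pinsker's inequality then yields $D_{TV}(\mathbb{P}^n_\mathcal{T}, \mathbb{P}^n_{\widehat{\mathcal{T}}}) \leq \sqrt{n\alpha/2}$, and substituting $\alpha \leq \epsilon^2/(2n^2)$ delivers $\epsilon/(2\sqrt{n}) \leq \epsilon/2$ for $n \geq 1$.

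Combining the two halves via the triangle inequality closes the argument. The most delicate step is the chain-rule decomposition in the continuous setting: it needs enough measure-theoretic regularity, specifically absolute continuity of $\mathcal{T}(\cdot\mid s)$ with respect to $\widehat{\mathcal{T}}(\cdot\mid s)$, for the per-step KL divergences and their expectation to be well-defined. The hypothesis (\ref{eq:tvdistance}) effectively imposes this pointwise in $s$. Once that is in place, Hoeffding's and Pinsker's inequalities transfer verbatim from the discrete to the continuous case, so the proof reduces to these two standard tools combined by the triangle inequality, mirroring the structure of Theorem \ref{prop:boundonm} while accounting for the model error through the KL hypothesis.
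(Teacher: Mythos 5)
Your proof is correct, and while the statistical half (Hoeffding at tolerance $\epsilon/2$, yielding $m \geq \tfrac{2}{\epsilon^2}\log\tfrac{2}{\delta}$) coincides exactly with the paper's, your treatment of the model-bias term takes a genuinely different route. The paper first proves an \emph{error amplification} lemma: it applies Pinsker's inequality to each one-step transition and then accumulates total-variation error linearly over time by an induction on $t$, obtaining $D_{\textrm{VAR}}(\mathcal{T}^t,\widehat{\mathcal{T}}^t) \leq t\sqrt{\alpha/2}$ for the marginal state distributions; it then passes from these marginals to the path-level event $\square^{\leq n}\Psi$ via a simulation-lemma argument through the \emph{average} state occupancy measures $\rho_{\mathcal{T}},\rho_{\widehat{\mathcal{T}}}$, landing on the bound $n\sqrt{\alpha/2} = \epsilon/2$. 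You instead work directly with the $n$-step path measures: the chain rule for KL divergence gives $D_{KL}\innerp*{\mathbb{P}^n_{\mathcal{T}};\mathbb{P}^n_{\widehat{\mathcal{T}}}} \leq n\alpha$, a single application of Pinsker gives $D_{TV} \leq \sqrt{n\alpha/2}$, and the bias follows immediately because the event is measurable with respect to the $n$-step prefix. Your route is shorter, avoids both the induction and the occupancy-measure detour (the latter being the least transparent step in the paper, since the simulation lemma is usually stated for state-wise functions rather than path events), and is quantitatively stronger: you get $\epsilon/(2\sqrt{n})$ rather than $\epsilon/2$, so the same conclusion would hold under the weaker hypothesis $\alpha \leq \epsilon^2/(2n)$. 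Your caveat about absolute continuity is well placed and is implicitly assumed by the paper as well.
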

\begin{proof}
The proof follows from Theorem \ref{prop:boundonm}, Pinsker's inequality \cite{tsybakov2004introduction} and the {\em simulation lemma} \cite{kearns2002near} adapted to our purposes, see Appendix \ref{sec:proofs} for details. 
\end{proof}

We use KL divergence here rather than variation distance, to obtain a more general result, that easily extends to many common continuous settings. Some examples include, Gaussian dynamics centred at the current state \cite{brunskill2009provably}, linear MDPs, and kernelised non-linear regulators (KNRs) \cite{kakade2020information, mania2020active}, which capture popular models in the robotics community, such as linear quadratic regulators (LQR) \cite{abbasi2011regret}, piece-wise linear systems, non-linear higher-order polynomial systems and Gaussian processes (GP) \cite{deisenroth2011pilco}. For isotropic Gaussian dynamics the KL divergence reduces to the $l_2$ (squared) loss, similarly for linear MDPs and more general KNRs the KL divergence has a convenient form.

Theorem \ref{prop:kl} provides us with a sample complexity bound for estimating $\mu_{s \models \phi}$ up to $\epsilon$ error, under the assumption that the KL divergence between the {\em true } transition system $\mathcal{T}$ and approximate transition system $\widehat{\mathcal{T}}$ is bounded. In general, this assumption is quite strong and may only be met when the entire state space is sufficiently explored. However, in reality we may only require the KL divergence to be bounded for the safety-relevant subset of the state space. Moreover, for a number of continuous settings, sample complexity results have been developed \cite{song2021pc, kakade2020information, brunskill2009provably, abbasi2011regret}, so that we known when Eq.~\ref{eq:tvdistance} is satisfied with high probability.


\subsection{Partial Observability}

Without additional assumptions, sample complexity bounds for POMDPs are difficult to obtain \cite{lee2023learning}. A common way to reason about the underlying states of a POMDP is with {\em belief states}, which are defined as a filtering distribution $p(s_t \mid o_{\leq t}, a_{\leq t})$, where $(o_{\leq t}, a_{\leq t})$ is the history of past observations and actions. However, rather than maintaining an explicit distribution over the states, we consider a latent representation of the history $b_t = f(o_{\leq t}, a_{\leq t})$. The idea is that this latent representation captures the important statistics of the filtering distribution, so that $p(s_t \mid o_{\leq t}, a_{\leq t}) \approx p(s_t \mid b_t)$. 

To develop an algorithm for this setting we present a theorem adapted from \cite{goodallb2023approximate}, which shows that -- the divergence between belief states in $\mathcal{T}$ and $\widehat{\mathcal{T}}$ is an upper bound on the divergence between the underlying states in $\mathcal{T}$ and $\widehat{\mathcal{T}}$. This bound holds for generic $f$-divergence measures, which includes KL divergence and variation distance.

\begin{thm} \label{prop:pomdp} Let $b_t$ be a latent representation (belief state) such that $p(s_t \mid o_{t\leq t}, a_{\leq t}) = p(s_t \mid b_t)$. 
Let the fixed policy $\pi(\cdot \mid b_t)$ be a general probability distribution conditional on belief states $b_t$. Let $f$ be a generic $f$-divergence measure (e.g., KL divergence). Then the following holds:
\begin{equation}
    D_{f}(\mathcal{T}(s' \mid b ), \widehat{\mathcal{T}}(s' \mid b )) \leq D_{f}(\mathcal{T}(b' \mid b ), \widehat{\mathcal{T}}(b' \mid b ))
\end{equation}
where $\mathcal{T}$ and $\widehat{\mathcal{T}}$ are the true and approximate transition system respectively, defined now over both states $s$ and belief states $b$.
\end{thm}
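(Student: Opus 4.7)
The plan is to reduce the claim to the data processing inequality for $f$-divergences, which states that applying the same Markov kernel to two distributions cannot increase their $f$-divergence. First, I would unpack what it means for $b$ to be a sufficient statistic: the assumption $p(s_t \mid o_{\leq t}, a_{\leq t}) = p(s_t \mid b_t)$ singles out a conditional kernel $K(s' \mid b')$ mapping belief states to underlying states, and this kernel is a property of the belief representation itself, shared by both the true and the approximate world---the approximate dynamics only change how successor belief states are sampled, not how a belief encodes a distribution over the hidden state.

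Next, I would write the one-step state transition in belief-conditioned form. In both systems the joint law of $(b', s')$ starting from $b$ factors as a belief transition times the common decoding kernel $K$, so that
\begin{equation*}
\mathcal{T}(s' \mid b) = \int K(s' \mid b') \, \mathcal{T}(b' \mid b) \, db', \qquad \widehat{\mathcal{T}}(s' \mid b) = \int K(s' \mid b') \, \widehat{\mathcal{T}}(b' \mid b) \, db'.
\end{equation*}
Thus the two state-level conditionals are push-forwards of the two belief-level conditionals under the same Markov kernel $K$. Applying the standard $f$-divergence data processing inequality $D_f(K\mu, K\nu) \leq D_f(\mu, \nu)$, with $\mu = \mathcal{T}(\cdot \mid b)$ and $\nu = \widehat{\mathcal{T}}(\cdot \mid b)$, then delivers the claimed inequality immediately.

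The main obstacle I anticipate is the first step: justifying carefully that the decoding kernel $K$ is identical in $\mathcal{T}$ and $\widehat{\mathcal{T}}$, since otherwise one would be comparing images under two different kernels and the contraction property would fail. I would handle this by appealing directly to the hypothesis of the theorem, which ties $p(s \mid b)$ to the definition of the belief state (a deterministic function of the history) rather than to any particular transition model. Once this point is settled, the remainder of the argument is a textbook invocation of the data processing inequality, which holds for every $f$-divergence and therefore uniformly covers the KL-divergence and total-variation instances the paper needs elsewhere.
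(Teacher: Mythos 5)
Your proposal is correct and follows essentially the same route as the paper: the paper also reduces the claim to the data processing inequality for $f$-divergences (writing out its proof explicitly via Jensen's inequality applied to the joint law of $(s', b')$), whereas you invoke the inequality as a black box on the push-forward under the decoding kernel $K(s' \mid b')$. Your explicit justification that $K$ is shared between $\mathcal{T}$ and $\widehat{\mathcal{T}}$ --- because it is determined by the belief representation rather than the dynamics model --- is exactly the step the paper uses implicitly when it equates the ratio of joint densities over $(s', b')$ with the ratio of belief-transition densities, so if anything your write-up makes the one delicate point more visible.
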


In DreamerV3 \cite{hafner2023mastering} the learned policy operates on the posterior belief state representation (i.e.,~$\hat s_t = (z_t, h_t)$), which is computed with the current observation $o_t$ as input to the model. We can think of the transition system $\mathcal{T}$ as the recurrent model being used with access to $o_t$ at each timestep. Alternatively, we can think of the transition system $\widehat{\mathcal{T}}$ as the recurrent model being used autoregressively, where the belief states are sampled from the prior transition predictor (i.e.,~$\hat z_t \sim p_{\theta}(\hat z_t \mid h_t)$). In the DreamerV3 objective function the KL divergence between the prior and posterior belief states is minimised, which then minimises an upper bound on the KL divergence between $\mathcal{T}$ and $\widehat{\mathcal{T}}$ on the underlying states. And so Theorem \ref{prop:pomdp} provides a direct way to extend the sample complexity bounds introduced in Sec.~\ref{sec:fullobservability} to POMDPs.

By leveraging world models \cite{hafner2019learning, hafner2023mastering} we hope to learn a suitable latent space that captures the sufficient statistics of the observation and action history. We can then pick suitable values for the hyperparameters $\Delta$, $\epsilon$, $\delta$ and $n$, that reflect the desired level of safety we want to attain, and we can pick an $m$ (number of traces sampled) that satisfies the bound in Eq.~\ref{eq:boundonmbig}.




\section{Penalty Techniques}
\label{sec:penalty}

In this section we present three penalty techniques that directly modify the policy gradient, to provide the task policy with safety information, with the idea that it will converge to a policy that balances both reward and safety. We first introduce a simple penalty critic (abbrv.~PENL), which acts as a baseline for the two more sophisticated penalty techniques, PLPG and COPT, that are presented later on in Sec.~\ref{sec:plpg} and Sec.~\ref{sec:copt} respectively.

\subsection{Penalty Critic (PENL)}
\label{sec:penaltycritic}

In this subsection we introduce a simple penalty critic into the policy gradient for the task policy $\pi_{\text{task}}$. We first note the form of  the standard REINFORCE policy gradient \cite{sutton2018reinforcement},
\begin{equation}
    \nabla \mathcal{J} = \mathbb{E}_{\pi_{\text{task}}} \Big[ \sum^H_{t=1} G_t \cdot \nabla \log \pi_{\text{task}}(a_t \mid s_t)\Big]
    \label{eq:policygradient}
\end{equation}
where $G_t$ is the full Monte Carlo return, which can be replaced by either the advantages $A_t = Q(s_t, a_t) - V(s_t)$, or TD-$\lambda$ targets $R^{\lambda}_t$, see \cite{hafner2023mastering} for details.

The penalty critic estimates the discounted accumulated cost from a given state, taking the expectation under the task policy. Specifically, we have, $V^{\pi_{\text{task}}}_C(s) = \mathbb{E}_{\pi_{\text{task}}}[\sum^{\infty}_{t=1} \gamma^{t-1} \cdot c_t \mid s_0 = s]$. We propose the following new form for the policy gradient,
\begin{equation}
    \nabla \mathcal{J} = \mathbb{E}_{\pi_{\text{task}}} \Big[ \sum^H_{t=1} (G_t - \alpha \cdot G^C_t) \cdot \nabla \log \pi_{\text{task}}(a_t \mid s_t)\Big]
    \label{eq:penaltypolicygradient}
\end{equation}
where $G^C_t$ is the cost return and $\alpha$ is a hyperparameter that weights the cost returns $G^C_t$ against the standard returns $G_t$.

By modifying the policy gradient in this simple way, the task policy is optimised to maximise reward and minimise cost, so that we may avoid the issue presented earlier in Fig.~\ref{fig:naiveambs} (Sec.~\ref{sec:introduction}).

We note that this is a very straightforward way of using the penalty (or cost) critic and there are certainly more sophisticated approaches. As a baseline for our experiments, we consider a version of DreamerV3 \cite{hafner2023mastering} that implements the Augmented Lagrangian penalty framework \cite{wright2006numerical}, which adpatively modifies the coefficient of the penalty critic, to obtain a smooth approximation of the corresponding constrained optimisation problem. We provide a description of this framework in Appendix \ref{sec:augmentedlagrangian} and for additional implementation details we refer the reader to \cite{as2022constrained}.

\subsection{Probabilistic Logic Policy Gradient (PLPG)}
\label{sec:plpg}

The {\em Probabilistic Logic Policy Gradient} (PLPG) \cite{lee2023learning} connects safety-aware policy optimisation with probabilistic safety semantics. In particular, the PLPG directly optimises a shielded policy which re-normalises the probabilities of the actions of a given base policy; by increasing the probability of actions that are safer on average and decreasing those that aren't. The key motivation of this probabilistic shielding framework, is that it comes with the same convergence guarantees as standard policy gradient methods. In contrast to classical rejection based shielding methods, where these guarantees are unobtainable.

The original form of the PLPG \cite{yang2023safe} only considers one step probabilistic safety semantics, and so we adapt it to our purposes as follows. We use bounded safety semantics to specify the following re-normalising coefficient of the returns $G_t$,
\begin{equation}
    \frac{\mathbb{P}_{\pi_{\text{safe}}}\left[s_t \models \square^{\leq H} \Psi \mid s_t, a_t \right]}{\mathbb{P}_{\pi_{\text{safe}}}\left[s_t \models \square^{\leq H} \Psi \mid s_t\right]}
\end{equation}

In words, the numerator corresponds to the probability (under the backup policy, i.e., ~$\pi_{\text{safe}}$) of satisfying bounded safety by taking action $a_t$ from $s_t$ and the divisor is the marginal probability under actions selected from the backup policy. The intuition is that for critical actions that are unsafe this term will be small, and so the gradient of these actions will be scaled to zero. In addition, PLPG modifies the policy gradient with a log probability penalty term, $- \alpha \log \mathbb{P}_{\pi_{\text{task}}}[s_t \models \square^{\leq H} \Psi \mid s_t]$, which penalises unsafe states. 

Using the control as inference framework (i.e.,~$p( O_t = 1) = \exp(R(s_t, a_t))$) \cite{levine2018reinforcement}, we can derive a convenient form for the policy gradient,
\begin{equation}
	\nabla \mathcal{J} = \mathbb{E}_{\pi_{\text{task}}}\Big[  \sum^{\infty}_{t=0} \big( sg(\exp(\delta^{\text{safe}}_{t})) \cdot G_t - \alpha \cdot G_t^C \cdot \big) \nabla\log \pi_{\text{task}}(a_t \: \mid \: s_t)  \Big]
 \label{eq:plpg}
\end{equation}
where $\delta^{\text{safe}}_{t} = c(s_t, a_t) + V^{\pi_{\text{safe}}}_C(s_{t+1}) - V^{\pi_{\text{safe}}}_C(s_t)$, is the (undiscounted) temporal difference (TD) of the backup policy cost return, $sg(\cdot)$ is the stop gradient operator and $G^C_t$ is once again the cost return of the task policy. We provide a full derivation of this policy gradient in Appendix \ref{sec:derivations}.

\subsection{Counter-example Guided Policy Optimisation (COPT)}
\label{sec:copt}

In the context of model checking \cite{baier2008principles}, a counter-example is defined as a trace of the system that violates the desired property being checked. We introduce a novel penalty method based on this idea. In DreamerV3 \cite{hafner2019learning}, during training of the task policy we sample a batch of starting states from the replay buffer and generate a trace for each starting state, by rolling out the world model with the task policy. These traces are then used to compute TD-$\lambda$ targets for the policy gradient (Eq.~\ref{eq:policygradient}). Our goal is to identify counter-example traces and use them to directly modify the policy gradient in such a way that unsafe (but possibly rewardful) behaviours are not encouraged. For a given finite trace $\tau = s_t \to s_{t+1} \to \ldots \to s_{t+n}$ we denote the (discounted) cost from $s_t$ as follows,
\begin{equation}
    \textrm{cost}(s_t) = \sum^{n}_{i=t} \gamma^{i-t-1} \cdot C \cdot \mathbbm{1}[s_{i} \not \models \Psi] 
\end{equation}
where $C>0$ is a hyperparameter denoting the cost incurred when $\Psi$ is violated. If the cost of a trace is above a particular threshold then it is considered a counter-example. In particular, if $\textrm{cost}(s_t) \geq \gamma^{n-1} \cdot C$, this implies that the trace $\tau$ does not satisfy bounded safety (i.e.~$\exists \, i \; t < i \leq t+n \; s.t. \; s_{i} \not \models \Psi$), and so we consider it a counter-example. Based on the same intuition used in Sec.~\ref{sec:plpg}, we scale the gradients of these counter-examples to zero, so as not to encourage unsafe behaviours. Rather than a hard decision margin, we consider the following sigmoid function centred at $\gamma^{n-1} \cdot C$,
\begin{equation}
    w(x) = \frac{1}{1 + \exp(- \frac{1}{\kappa}(x - \gamma^{n-1}\cdot C))}
\end{equation}
where $\kappa>0$ is some scaling hyperparameter that determines how steep the sigmoid curve is. Let $W_t = w(\textrm{cost}(s_t))$, we then modify the policy gradient as follows,
\begin{equation}
\nabla \mathcal{J} = \mathbb{E}_{\pi_{\text{task}}} \Big[ \big( \sum^H_{t=1} sg(1 - W_t) \cdot G_t - \alpha \cdot G^C_t \big) \cdot \nabla \log \pi_{\text{task}}(a_t \mid s_t)\Big]
\label{eq:copt}
\end{equation}
where again $sg(\cdot)$ is the stop gradient function. Note here that we have also added the same penalty term (i.e.,~$- \alpha G_t^C$) from before. We demonstrate in our experiments that these further modifications to the policy gradient, in addition to the penalty term ($- \alpha G_t^C$), are important for stable convergence in the long run.


\section{Experimental Results}

In this section we present the results of running AMBS on three separate Safety Gym environments, and we compare each of the three penalty techniques described in Sec.~\ref{sec:penalty}. Hyperparameter settings and additional results can be found in Appendix \ref{sec:hyperparameters} and \ref{sec:additionalresults} respectively.

\paragraph{Safety Gym} We evaluate each algorithm on the following vision based environments from the Safety Gym benchmark \cite{ray2019benchmarking}: {\em PointGoal1}, {\em PointGoal2} and {\em CarGoal1}. At each timestep the agent receives a pixel observation $o_t$, which corresponds to a $3 \times 64 \times 64$ dimensional tensor. The agent then picks an action $a_t \in [-1,+1]^2$, which corresponds to rotational or linear actuator forces. In all environments the action space and underlying state space are continuous. Fig.~\ref{fig:envs} presents an overview of each of the environments, including the observations and corresponding safety-formula $\Psi$. In Appendix \ref{sec:safetygym} we provide additional details regarding the environments and vehicles used in our experiments.

\begin{figure}[!b]
    \centering
    \hfill
    \begin{subfigure}[t]{0.1\textwidth}
        \centering
        \includegraphics[width=\textwidth]{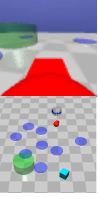}
        \caption{PointGoal1 \\ $\Psi = \neg hazard$}
        \label{fig:pointgoal1}
    \end{subfigure}
    \hfill
    \begin{subfigure}[t]{0.1\textwidth}
        \centering
        \includegraphics[width=\textwidth]{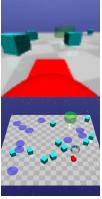}
        \caption{PointGoal2 \\ $\Psi = \neg hazard \land \neg collision$}
        \label{fig:pointgoal2}
    \end{subfigure}
    \hfill
    \begin{subfigure}[t]{0.1\textwidth}
        \centering
        \includegraphics[width=\textwidth]{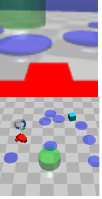}
        \caption{CarGoal1 \\ $\Psi = \neg hazard$}
        \label{fig:cargoal1}
    \end{subfigure}
    \caption{SafetyGym environments.}
    \label{fig:envs}
\end{figure}

\paragraph{Comparisons} We run AMBS \cite{goodallb2023approximate} with each of the three penalty techniques separately (i.e., PENL, PLPG and COPT), as a baseline we use a version of DreamerV3 \cite{hafner2023mastering} that implements the Augmented Lagrangian penalty framework (LAG) \cite{wright2006numerical, as2022constrained}, our baseline is similar to the algorithm proposed in \cite{huang2023safe} which is also built on DreamerV3. For completeness we also provide additional results for `vanilla' DreamerV3 \cite{hafner2023mastering} and `vannila' AMBS \cite{goodallb2023approximate} in Appendix \ref{sec:additionalresults}.

\begin{figure}[t]
    \centering
    \begin{subfigure}{0.235\textwidth}
         \centering
         \includegraphics[width=\textwidth]{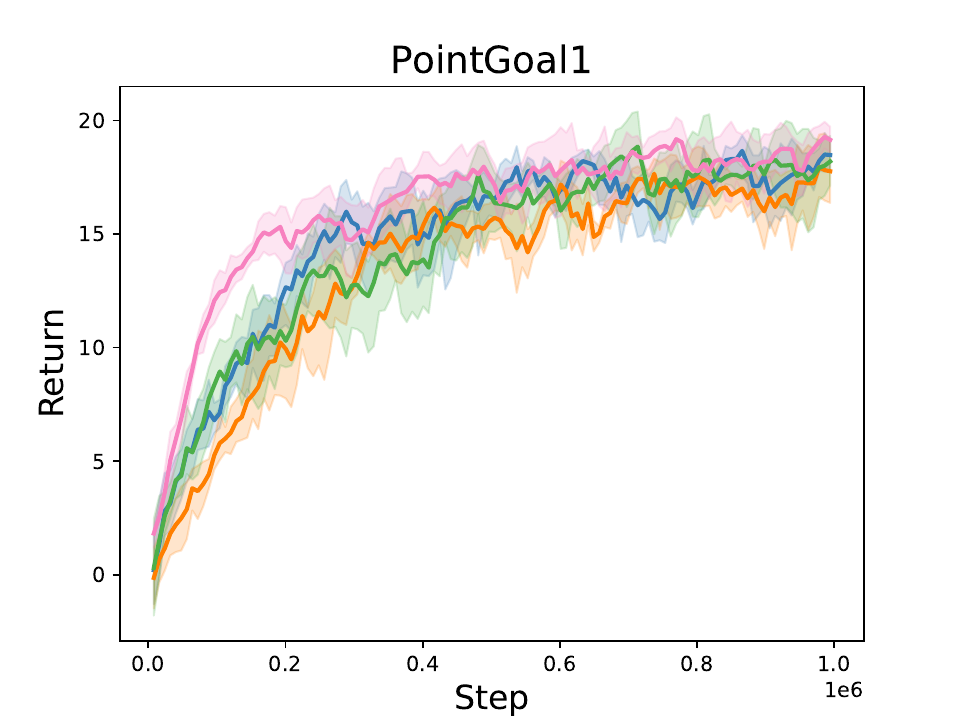}
     \end{subfigure}
     \hfill
     \begin{subfigure}{0.235\textwidth}
         \centering
         \includegraphics[width=\textwidth]{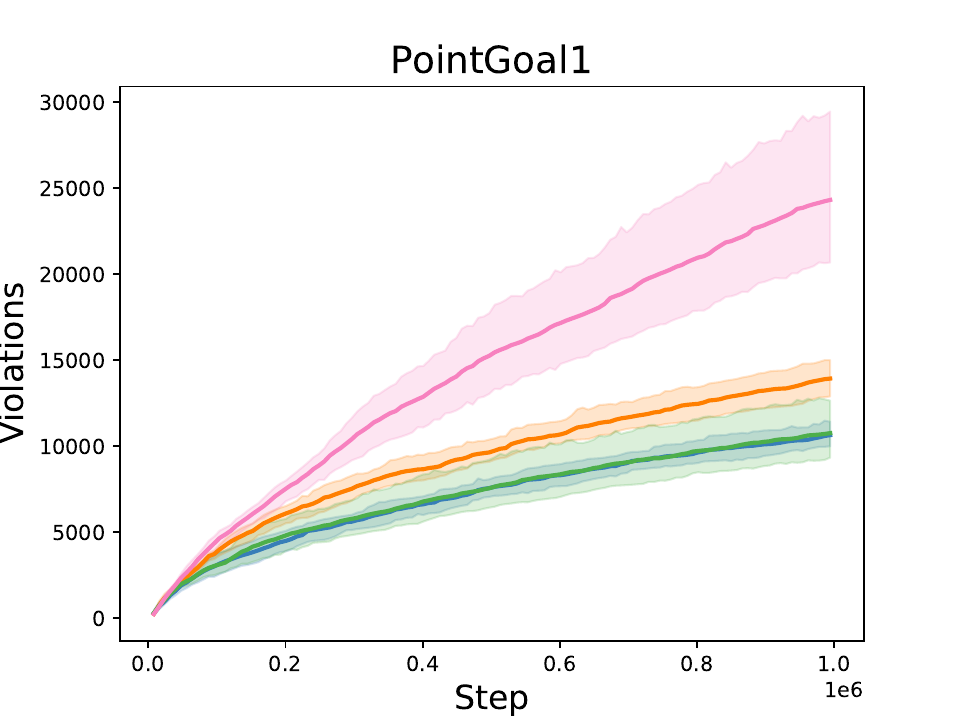}
     \end{subfigure}
     \hfill
     \begin{subfigure}{0.235\textwidth}
         \centering
         \includegraphics[width=\textwidth]{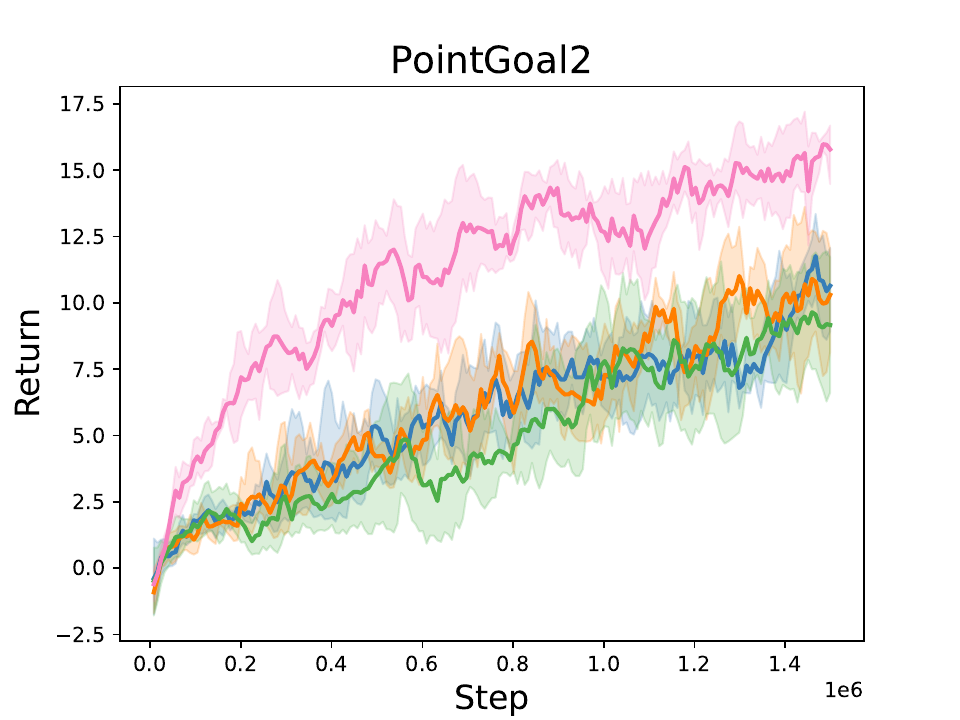}
     \end{subfigure}
     \hfill
     \begin{subfigure}{0.235\textwidth}
         \centering
         \includegraphics[width=\textwidth]{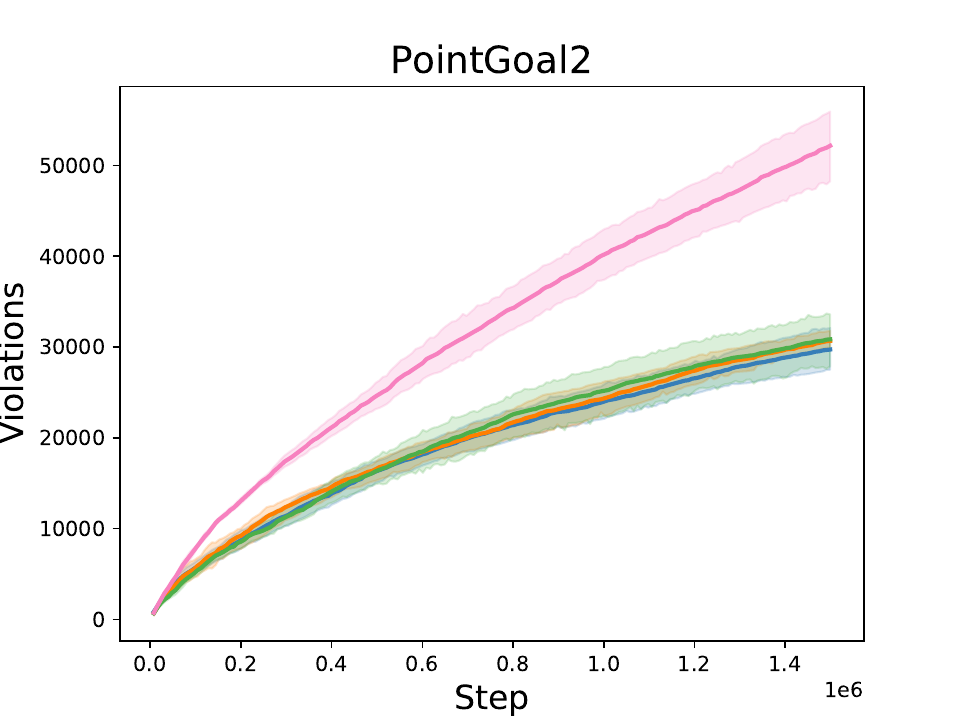}
     \end{subfigure}
    \hfill
     \begin{subfigure}{0.235\textwidth}
         \centering
         \includegraphics[width=\textwidth]{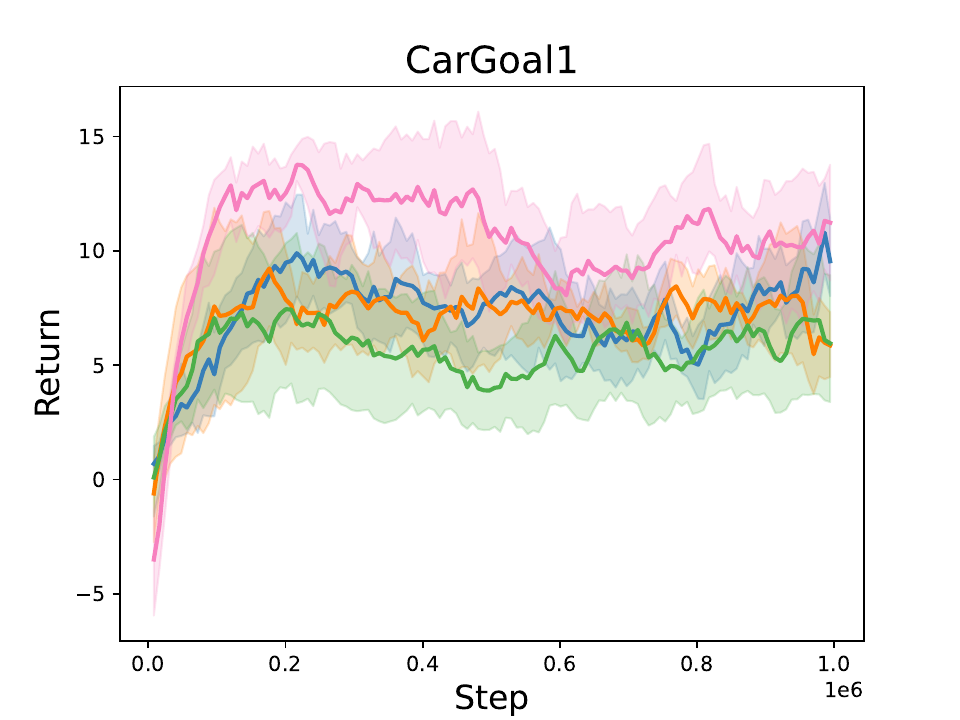}
     \end{subfigure}
     \hfill
     \begin{subfigure}{0.235\textwidth}
         \centering
         \includegraphics[width=\textwidth]{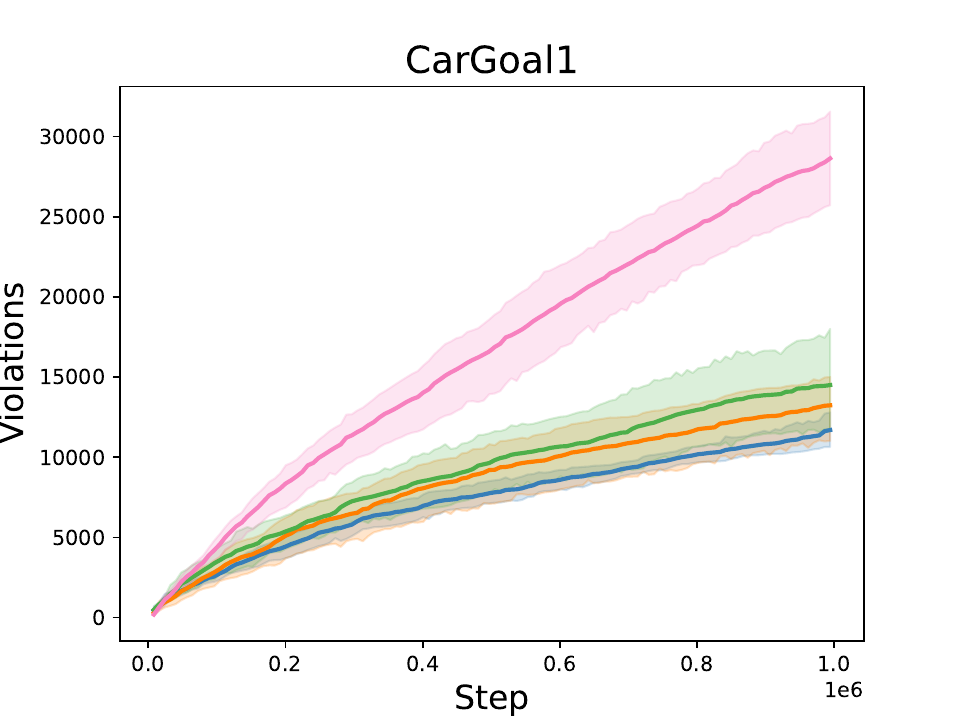}
     \end{subfigure}
     \hfill
    \begin{subfigure}{0.45\textwidth}
        \centering
        \includegraphics[width=\textwidth]{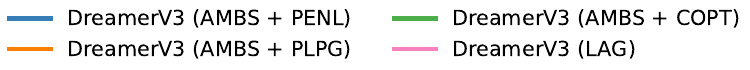}
    \end{subfigure}
    \caption{Episode return (left) and cumulative violations (right) for PointGoal1, PointGoal2 and CarGoal1.}
    \label{fig:safetygymresults}
\end{figure}

\paragraph{Training Details} Each agent is trained on a single Nvidia Tesla A30 (24GB RAM) GPU and a 24-core/48 thread Intel Xeon CPU with 256GB RAM. For the PointGoal1 and CarGoal1 environments we trained each agent for 1M frames. For the PointGoal2 environment we trained each agent for 1.5M frames, since this environment is harder to master. Each run is averaged over 5 seeds, and 95\% confidence intervals are reported both in the plots and table of results.

\paragraph{Results} We plot the accumulated episode return and cumulative number of safety-violations during training, which can be interpreted as the cost regret \cite{ray2019benchmarking}. Fig.~\ref{fig:safetygymresults} and Tab.~\ref{tab:results} present the results. We see that across all environments our methods outperform the baseline w.r.t.~the cumulative number of safety-violations. In terms of episode return, our methods clearly exhibit slower convergence than DreamerV3 with LAG. However, this is a trade-off we would expect to be present in these Safety Gym environments. In the PointGoal1 environment our methods converge to a policy with similar performance (w.r.t.~episode return), while committing far fewer safety violations than the baseline. In the PointGoal2 environment it looks as if none of the methods have converged and our methods continue to monotonically improve throughout the 1.5M frames. In the CarGoal1 environment all the methods exhibit some policy instability as the {\em Car} vehicle is harder to navigate with. Perhaps further investigation is need in this environment. Comparing between the three penalty techniques; clearly they all exhibit similar behaviour during training with the simple penalty critic obtaining slightly better results in terms of cumulative violations.

\paragraph{Comparing Convergence in the Long Run} In Safety Gym \cite{ray2019benchmarking} it is typical for model-free algorithms to take 10M frames to converge. However, since \cite{as2022constrained} demonstrated far superior sample complexity with model-based RL, it has become clear that model-based algorithms can converge within 1M-2M frames in most of the environments in Safety Gym. Given that we do exhibit slower convergence than the baseline (LAG). It might be interesting to analyze the behaviour of our algorithms for longer training runs. As is demonstrated in ~Fig.~\ref{fig:longresults}, AMBS with the simple penalty critic appears to diverge during longer training runs. However, the more principled penalty techniques, i.e.~PLPG, COPT and LAG maintain more stable convergence properties. We stress that more work is required to understand the convergence properties of PLPG and COPT and what guarantees they have if any.


\begin{figure}[t]
    \centering
    \begin{subfigure}{0.235\textwidth}
         \centering
         \includegraphics[width=\textwidth]{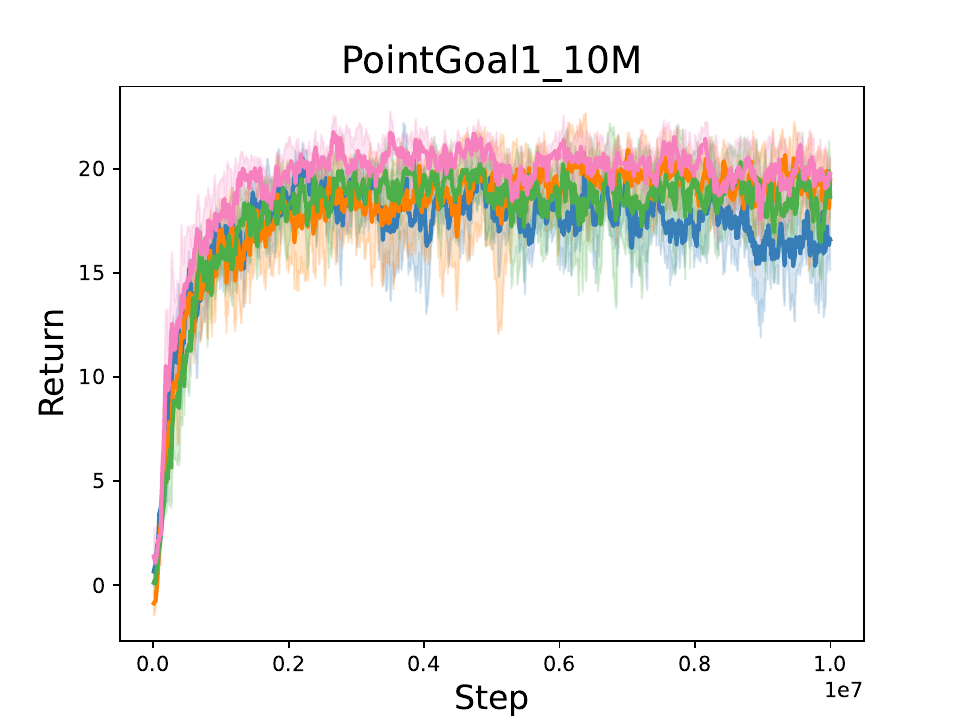}
     \end{subfigure}
     \hfill
     \begin{subfigure}{0.235\textwidth}
         \centering
         \includegraphics[width=\textwidth]{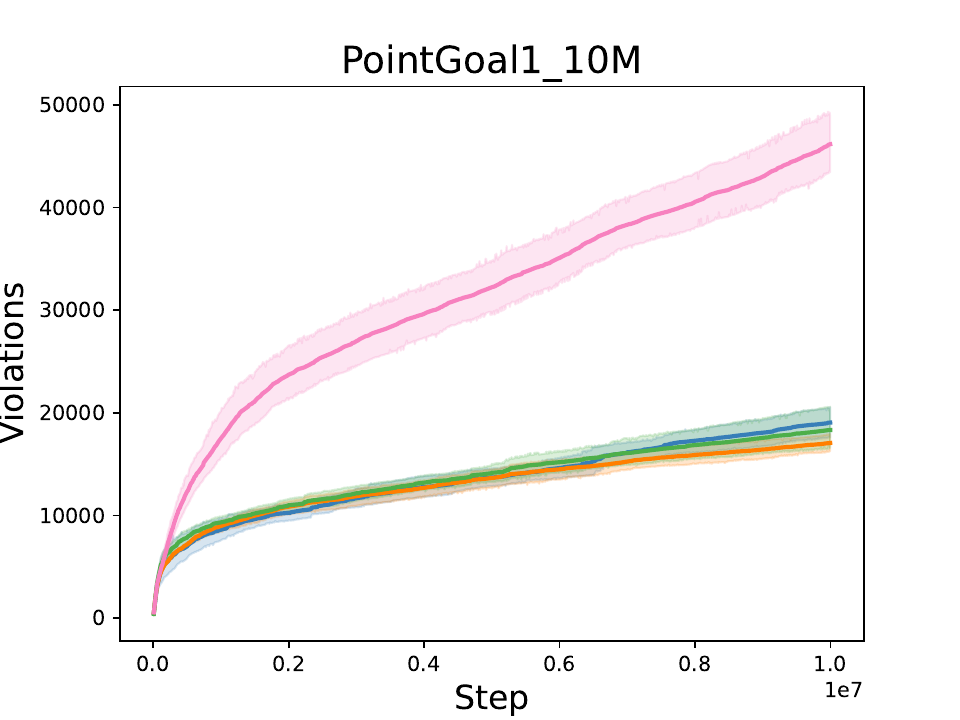}
     \end{subfigure}
     \hfill
     \begin{subfigure}{0.45\textwidth}
        \centering
        \includegraphics[width=\textwidth]{images/legend.pdf}
    \end{subfigure}
    \caption{Long run (10M frames) episode return (left) and cumulative violations (right) for PointGoal1.}
    \label{fig:longresults}
\end{figure}

\begin{table*}[!h]
\centering
\caption{Episode return and cumulative violations at the end of training.}
\begin{tabular}{| M{1.5cm} M{2.2cm} | P{2.4cm} P{2.4cm} P{2.4cm} P{2.4cm} |}
     \hline
      & & AMBS + PENL & AMBS + PLPG & AMBS + COPT & DreamerV3 + LAG  \\
     \hline
     \multirow{2}{6em}{PointGoal1 (1M)} & Episode Return $\uparrow$ & $17.32 \pm 3.29$ & $17.76 \pm 2.18$ & $18.19 \pm 1.72$ & $\mathbf{19.15 \pm 0.92}$ \\
  & \# Violations $\downarrow$ & $\mathbf{9354 \pm 3734}$ & $13937 \pm 1722$ & $10766 \pm 2877$ & $24996 \pm 6627$ \\
  \hline
     \multirow{2}{6em}{PointGoal2 (1.5M)} & Episode Return $\uparrow$ & $10.64 \pm 2.61 $ & $ 10.30 \pm 3.44$ & $9.16 \pm 4.07$ & $\mathbf{15.78 \pm 1.84}$ \\
  & \# Violations $\downarrow$ & $\mathbf{29720 \pm 3850}$  & $30673 \pm 1800$ & $ 30839 \pm 4647$ & $52157 \pm 6151$ \\
  \hline
  \multirow{2}{6em}{CarGoal1 \\ (1M)} & Episode Return $\uparrow$ & $8.87 \pm 2.95$ & $5.86 \pm 2.30$ & $5.96 \pm 4.15$ & $\mathbf{11.23 \pm 4.10}$ \\
  & \# Violations $\downarrow$ & $\mathbf{11423 \pm 1479}$ & $13236 \pm 3294$ & $14500 \pm 4675$ & $28639 \pm 4644$ \\
  \hline
  \multirow{2}{6em}{PointGoal1 (10M)} & Episode Return $\uparrow$ & $16.60 \pm 2.23$ & $19.45 \pm 1.62$ & $18.66 \pm 2.15$ & $\mathbf{19.74 \pm 1.43}$ \\
  & \# Violations $\downarrow$ & $19039 \pm 2339$ & $\mathbf{17049 \pm 1321}$ & $18320 \pm 3080$ & $46153 \pm 4637$ \\
     \hline
\end{tabular}
\label{tab:results}
\end{table*}



\section{Related Work}

\textit{Safe RL} is concerned with the problem of learning policies that maximise expected accumulated reward in environments where it is expected to comply with some minimum standard of safe behaviour, typically during training and more importantly during deployment of the agent \cite{garcia2015comprehensive}. Safe RL is an important problem if we are to deploy RL systems on meaningful tasks in the real world, where it is necessary to avoid harmful or catastrophic situations \cite{amodei2016concrete}. Thus, before deploying systems in the real world, it is necessary to determine whether they can balance reward and constraint satisfaction on benchmarks like Safety Gym \cite{ray2019benchmarking}. 

In this paper we formulate the safe RL problem by using propositional safety-formulas \cite{goodalla2023approximate, goodallb2023approximate} and safety properties such as bounded safety \cite{giacobbe2021shielding}. However, CMDPs \cite{altman1999constrained} are certainly a more common way of formulating the safe RL problem. In CMDPs, cumulative discounted or episodic cost constraints are typically considered. Moreover, chance constraints \cite{ono2015chance}, probabilistic constraints, and more \cite{chow2017risk, xiong2023provably, agnihotri2023average} have also been considered. 

For CMDPs, several model-free approaches to safe RL have been proposed \cite{ray2019benchmarking, chow2017risk, achiam2017constrained}, including those based on trust region methods \cite{schulman2015trust, duan2016benchmarking} and Lagrange relaxations. Constrained policy optimisation (CPO) \cite{achiam2017constrained} is based on trust region policy optimisation (TRPO) \cite{schulman2015trust}, which optimises a lower bound on policy performance, ensuring monotonic policy improvement updates that are close to the original policy. An additional repair step is used to ensure policy updates result in safe policies. Lagrange methods formulate a relaxation of the corresponding constrained problem to obtain a smooth optimisation problem. The Augmented Lagrangian \cite{wright2006numerical} is a popular method for safe RL based on adaptive penalty coefficients. Unfortunately, model-free methods are particularly sample inefficient, which for safety-critical systems is an issue. As a result, several model-based approaches have been proposed \cite{berkenkamp2017safe, thomas2021safe}, including those based on the Dreamer architectures \cite{as2022constrained, huang2023safe}, which have demonstrated far superior sample complexity on benchmarks like Safety Gym, including here.

\paragraph{Shielding} For a given temporal logic formula that encodes the constraints of the system, the goal of shielding is to prevent the agent from entering any unsafe states (i.e., those that violate the given formula), by constructing a reactive shield that overrides actions proposed by the agent when necessary. The original {\em Shielding for RL} \cite{alshiekh2018safe} came with strong guarantees, i.e., {\em correctness} and {\em minimal interference}, at the cost of quite restrictive assumptions. Since then, a significant amount of work on shielding has been concerned with either relaxing these assumptions to some extent, or applying a similar version of the original approach to simpler domains where these assumptions can be met. 

For example, {\em bounded prescience shielding} (BPS) \cite{giacobbe2021shielding} assumes access to a black-box simulator of the environment for bounded look-ahead shielding of the learned policy. {\em Latent shielding} \cite{he2021androids} and AMBS \cite{goodallb2023approximate} are closely related, both rely on world models \cite{hafner2019learning} to shield the learned policy, although AMBS is a more general framework that comes with probabilistic safety guarantees which we extend to the continuous state and action spaces in this paper. 
\balance

Other recent methods include PLPG \cite{lee2023learning}, which is based on probabilistic logic programs and comes with convergence guarantees. However, a suitable safety abstraction of the system with known probabilities is still assumed. Shielding has also been applied in partially observable settings \cite{carr2022safe}, resource constrained POMDPs \cite{ajdarow2022shielding}, multi-agent systems \cite{xiao2023model}, and for online learning tabular and parametric shields \cite{shperberg2022learning}.


\section{Conclusions}

In this paper we extended AMBS \cite{goodallb2023approximate} to the continuous setting and conducted experiments on the Safety Gym benchmark \cite{ray2019benchmarking}. We established probabilistic guarantees for the continuous setting and proposed three penalty techniques that are essential for the convergence of AMBS in Safety Gym, as they provide the underlying policy with the relevant safety information, see Fig.~\ref{fig:naiveambs} ( Sec.~\ref{sec:introduction}) and Appendix \ref{sec:additionalresults}.

For the three penalty techniques, we obtained a robust set of results on three Safety Gym environments, including a comparison to a version of DreamerV3 that implements the Augmented Lagrangian \cite{wright2006numerical}, `vanilla' DreamerV3 \cite{hafner2023mastering} and `vanilla' AMBS \cite{goodallb2023approximate}. In all cases our methods obtain far fewer safety violations during training, at the cost of slower convergence. We claim in the long run, that our methods may converge to a policy with similar performance (w.r.t.~episode return) compared to the baseline. Although, this claim needs to be more thoroughly investigated. During shorter training runs our three penalty techniques exhibit similar performance, both in terms of episode return and cumulative violations. However, we demonstrate that the more sophisticated techniques (i.e.~PLPG and COPT) are useful for stable convergence during longer runs.


Compared with CMDPs \cite{altman1999constrained}, shielding approaches are policy agnostic. The safety of the system depends entirely on the shield, whereas for CMDP-based approaches the safety of the system depends on the learned policy. We stress the importance of the probabilistic guarantees that have been established for AMBS, as they allow practitioners to choose hyperparameter settings based on their domain-specific safety requirements and have confidence in the safety of their systems. While these guarantees make non-trivial assumptions, for particular settings we know (with high confidence) when these assumptions are satisfied from previously established sample complexity bounds. For partially observable settings the use of world models \cite{hafner2019learning, hafner2023mastering} also has good theoretical backing. 

Important future work includes, a more thorough investigation into the convergence properties of PLPG and COPT, and the key components of AMBS. It may also be interesting to establish stronger theoretical claims for some common continuous settings, for which sample complexity bounds already exist, and present it in a way that is compatible with AMBS.





\begin{acks}
This work was supported by UK Research and Innovation [grant number EP/S023356/1], in the UKRI Centre for Doctoral Training in Safe and Trusted Artificial Intelligence (\url{www.safeandtrustedai.org}).
\end{acks}



\bibliographystyle{ACM-Reference-Format} 
\bibliography{sample}


\begin{thebibliography}{53}


\ifx \showCODEN    \undefined \def \showCODEN     #1{\unskip}     \fi
\ifx \showDOI      \undefined \def \showDOI       #1{#1}\fi
\ifx \showISBNx    \undefined \def \showISBNx     #1{\unskip}     \fi
\ifx \showISBNxiii \undefined \def \showISBNxiii  #1{\unskip}     \fi
\ifx \showISSN     \undefined \def \showISSN      #1{\unskip}     \fi
\ifx \showLCCN     \undefined \def \showLCCN      #1{\unskip}     \fi
\ifx \shownote     \undefined \def \shownote      #1{#1}          \fi
\ifx \showarticletitle \undefined \def \showarticletitle #1{#1}   \fi
\ifx \showURL      \undefined \def \showURL       {\relax}        \fi
\providecommand\bibfield[2]{#2}
\providecommand\bibinfo[2]{#2}
\providecommand\natexlab[1]{#1}
\providecommand\showeprint[2][]{arXiv:#2}

\bibitem[\protect\citeauthoryear{Abbasi-Yadkori and Szepesv{\'a}ri}{Abbasi-Yadkori and Szepesv{\'a}ri}{2011}]%
        {abbasi2011regret}
\bibfield{author}{\bibinfo{person}{Yasin Abbasi-Yadkori} {and} \bibinfo{person}{Csaba Szepesv{\'a}ri}.} \bibinfo{year}{2011}\natexlab{}.
\newblock \showarticletitle{Regret bounds for the adaptive control of linear quadratic systems}. In \bibinfo{booktitle}{\emph{Proceedings of the 24th Annual Conference on Learning Theory}}. JMLR Workshop and Conference Proceedings, \bibinfo{pages}{1--26}.
\newblock


\bibitem[\protect\citeauthoryear{Achiam, Held, Tamar, and Abbeel}{Achiam et~al\mbox{.}}{2017}]%
        {achiam2017constrained}
\bibfield{author}{\bibinfo{person}{Joshua Achiam}, \bibinfo{person}{David Held}, \bibinfo{person}{Aviv Tamar}, {and} \bibinfo{person}{Pieter Abbeel}.} \bibinfo{year}{2017}\natexlab{}.
\newblock \showarticletitle{Constrained policy optimization}. In \bibinfo{booktitle}{\emph{International conference on machine learning}}. PMLR, \bibinfo{pages}{22--31}.
\newblock


\bibitem[\protect\citeauthoryear{Agnihotri, Jain, and Luo}{Agnihotri et~al\mbox{.}}{2023}]%
        {agnihotri2023average}
\bibfield{author}{\bibinfo{person}{Akhil Agnihotri}, \bibinfo{person}{Rahul Jain}, {and} \bibinfo{person}{Haipeng Luo}.} \bibinfo{year}{2023}\natexlab{}.
\newblock \showarticletitle{Average-Constrained Policy Optimization}.
\newblock \bibinfo{journal}{\emph{arXiv preprint arXiv:2302.00808}} (\bibinfo{year}{2023}).
\newblock


\bibitem[\protect\citeauthoryear{Ajdar{\'o}w, Brlej, and Novotn{\`y}}{Ajdar{\'o}w et~al\mbox{.}}{2022}]%
        {ajdarow2022shielding}
\bibfield{author}{\bibinfo{person}{Michal Ajdar{\'o}w}, \bibinfo{person}{{\v{S}}imon Brlej}, {and} \bibinfo{person}{Petr Novotn{\`y}}.} \bibinfo{year}{2022}\natexlab{}.
\newblock \showarticletitle{Shielding in Resource-Constrained Goal POMDPs}.
\newblock \bibinfo{journal}{\emph{arXiv preprint arXiv:2211.15349}} (\bibinfo{year}{2022}).
\newblock


\bibitem[\protect\citeauthoryear{Ali and Silvey}{Ali and Silvey}{1966}]%
        {ali1966general}
\bibfield{author}{\bibinfo{person}{Syed~Mumtaz Ali} {and} \bibinfo{person}{Samuel~D Silvey}.} \bibinfo{year}{1966}\natexlab{}.
\newblock \showarticletitle{A general class of coefficients of divergence of one distribution from another}.
\newblock \bibinfo{journal}{\emph{Journal of the Royal Statistical Society: Series B (Methodological)}} \bibinfo{volume}{28}, \bibinfo{number}{1} (\bibinfo{year}{1966}), \bibinfo{pages}{131--142}.
\newblock


\bibitem[\protect\citeauthoryear{Alshiekh, Bloem, Ehlers, K{\"o}nighofer, Niekum, and Topcu}{Alshiekh et~al\mbox{.}}{2018}]%
        {alshiekh2018safe}
\bibfield{author}{\bibinfo{person}{Mohammed Alshiekh}, \bibinfo{person}{Roderick Bloem}, \bibinfo{person}{R{\"u}diger Ehlers}, \bibinfo{person}{Bettina K{\"o}nighofer}, \bibinfo{person}{Scott Niekum}, {and} \bibinfo{person}{Ufuk Topcu}.} \bibinfo{year}{2018}\natexlab{}.
\newblock \showarticletitle{Safe reinforcement learning via shielding}. In \bibinfo{booktitle}{\emph{Proceedings of the AAAI Conference on Artificial Intelligence}}, Vol.~\bibinfo{volume}{32}.
\newblock


\bibitem[\protect\citeauthoryear{Altman}{Altman}{1999}]%
        {altman1999constrained}
\bibfield{author}{\bibinfo{person}{Eitan Altman}.} \bibinfo{year}{1999}\natexlab{}.
\newblock \bibinfo{booktitle}{\emph{Constrained Markov decision processes: stochastic modeling}}.
\newblock \bibinfo{publisher}{Routledge}.
\newblock


\bibitem[\protect\citeauthoryear{Amodei, Olah, Steinhardt, Christiano, Schulman, and Man{\'e}}{Amodei et~al\mbox{.}}{2016}]%
        {amodei2016concrete}
\bibfield{author}{\bibinfo{person}{Dario Amodei}, \bibinfo{person}{Chris Olah}, \bibinfo{person}{Jacob Steinhardt}, \bibinfo{person}{Paul Christiano}, \bibinfo{person}{John Schulman}, {and} \bibinfo{person}{Dan Man{\'e}}.} \bibinfo{year}{2016}\natexlab{}.
\newblock \showarticletitle{Concrete problems in AI safety}.
\newblock \bibinfo{journal}{\emph{arXiv preprint arXiv:1606.06565}} (\bibinfo{year}{2016}).
\newblock


\bibitem[\protect\citeauthoryear{As, Usmanova, Curi, and Krause}{As et~al\mbox{.}}{2022}]%
        {as2022constrained}
\bibfield{author}{\bibinfo{person}{Yarden As}, \bibinfo{person}{Ilnura Usmanova}, \bibinfo{person}{Sebastian Curi}, {and} \bibinfo{person}{Andreas Krause}.} \bibinfo{year}{2022}\natexlab{}.
\newblock \showarticletitle{Constrained policy optimization via bayesian world models}.
\newblock \bibinfo{journal}{\emph{arXiv preprint arXiv:2201.09802}} (\bibinfo{year}{2022}).
\newblock


\bibitem[\protect\citeauthoryear{Baier and Katoen}{Baier and Katoen}{2008}]%
        {baier2008principles}
\bibfield{author}{\bibinfo{person}{Christel Baier} {and} \bibinfo{person}{Joost-Pieter Katoen}.} \bibinfo{year}{2008}\natexlab{}.
\newblock \bibinfo{booktitle}{\emph{Principles of model checking}}.
\newblock \bibinfo{publisher}{MIT press}.
\newblock


\bibitem[\protect\citeauthoryear{Berkenkamp, Turchetta, Schoellig, and Krause}{Berkenkamp et~al\mbox{.}}{2017}]%
        {berkenkamp2017safe}
\bibfield{author}{\bibinfo{person}{Felix Berkenkamp}, \bibinfo{person}{Matteo Turchetta}, \bibinfo{person}{Angela Schoellig}, {and} \bibinfo{person}{Andreas Krause}.} \bibinfo{year}{2017}\natexlab{}.
\newblock \showarticletitle{Safe model-based reinforcement learning with stability guarantees}.
\newblock \bibinfo{journal}{\emph{Advances in neural information processing systems}}  \bibinfo{volume}{30} (\bibinfo{year}{2017}).
\newblock


\bibitem[\protect\citeauthoryear{Bloem, K{\"o}nighofer, K{\"o}nighofer, and Wang}{Bloem et~al\mbox{.}}{2015}]%
        {bloem2015shield}
\bibfield{author}{\bibinfo{person}{Roderick Bloem}, \bibinfo{person}{Bettina K{\"o}nighofer}, \bibinfo{person}{Robert K{\"o}nighofer}, {and} \bibinfo{person}{Chao Wang}.} \bibinfo{year}{2015}\natexlab{}.
\newblock \showarticletitle{Shield synthesis: Runtime enforcement for reactive systems}. In \bibinfo{booktitle}{\emph{Tools and Algorithms for the Construction and Analysis of Systems: 21st International Conference, TACAS 2015, Held as Part of the European Joint Conferences on Theory and Practice of Software, ETAPS 2015, London, UK, April 11-18, 2015, Proceedings 21}}. Springer, \bibinfo{pages}{533--548}.
\newblock


\bibitem[\protect\citeauthoryear{Brockman, Cheung, Pettersson, Schneider, Schulman, Tang, and Zaremba}{Brockman et~al\mbox{.}}{2016}]%
        {brockman2016openai}
\bibfield{author}{\bibinfo{person}{Greg Brockman}, \bibinfo{person}{Vicki Cheung}, \bibinfo{person}{Ludwig Pettersson}, \bibinfo{person}{Jonas Schneider}, \bibinfo{person}{John Schulman}, \bibinfo{person}{Jie Tang}, {and} \bibinfo{person}{Wojciech Zaremba}.} \bibinfo{year}{2016}\natexlab{}.
\newblock \showarticletitle{Openai gym}.
\newblock \bibinfo{journal}{\emph{arXiv preprint arXiv:1606.01540}} (\bibinfo{year}{2016}).
\newblock


\bibitem[\protect\citeauthoryear{Brunskill, Leffler, Li, Littman, and Roy}{Brunskill et~al\mbox{.}}{2009}]%
        {brunskill2009provably}
\bibfield{author}{\bibinfo{person}{Emma Brunskill}, \bibinfo{person}{Bethany~R Leffler}, \bibinfo{person}{Lihong Li}, \bibinfo{person}{Michael~L Littman}, {and} \bibinfo{person}{Nicholas Roy}.} \bibinfo{year}{2009}\natexlab{}.
\newblock \showarticletitle{Provably efficient learning with typed parametric models}.
\newblock  (\bibinfo{year}{2009}).
\newblock


\bibitem[\protect\citeauthoryear{Carr, Jansen, Junges, and Topcu}{Carr et~al\mbox{.}}{2023}]%
        {carr2022safe}
\bibfield{author}{\bibinfo{person}{Steven Carr}, \bibinfo{person}{Nils Jansen}, \bibinfo{person}{Sebastian Junges}, {and} \bibinfo{person}{Ufuk Topcu}.} \bibinfo{year}{2023}\natexlab{}.
\newblock \showarticletitle{Safe reinforcement learning via shielding under partial observability}. In \bibinfo{booktitle}{\emph{Proceedings of the AAAI Conference on Artificial Intelligence}}, Vol.~\bibinfo{volume}{37}. \bibinfo{pages}{14748--14756}.
\newblock


\bibitem[\protect\citeauthoryear{Chow, Ghavamzadeh, Janson, and Pavone}{Chow et~al\mbox{.}}{2017}]%
        {chow2017risk}
\bibfield{author}{\bibinfo{person}{Yinlam Chow}, \bibinfo{person}{Mohammad Ghavamzadeh}, \bibinfo{person}{Lucas Janson}, {and} \bibinfo{person}{Marco Pavone}.} \bibinfo{year}{2017}\natexlab{}.
\newblock \showarticletitle{Risk-constrained reinforcement learning with percentile risk criteria}.
\newblock \bibinfo{journal}{\emph{The Journal of Machine Learning Research}} \bibinfo{volume}{18}, \bibinfo{number}{1} (\bibinfo{year}{2017}), \bibinfo{pages}{6070--6120}.
\newblock


\bibitem[\protect\citeauthoryear{Clarke, Grumberg, Jha, Lu, and Veith}{Clarke et~al\mbox{.}}{2000}]%
        {10.1007/10722167_15}
\bibfield{author}{\bibinfo{person}{Edmund Clarke}, \bibinfo{person}{Orna Grumberg}, \bibinfo{person}{Somesh Jha}, \bibinfo{person}{Yuan Lu}, {and} \bibinfo{person}{Helmut Veith}.} \bibinfo{year}{2000}\natexlab{}.
\newblock \showarticletitle{Counterexample-Guided Abstraction Refinement}. In \bibinfo{booktitle}{\emph{Computer Aided Verification}}, \bibfield{editor}{\bibinfo{person}{E.~Allen Emerson} {and} \bibinfo{person}{Aravinda~Prasad Sistla}} (Eds.). \bibinfo{publisher}{Springer Berlin Heidelberg}, \bibinfo{address}{Berlin, Heidelberg}, \bibinfo{pages}{154--169}.
\newblock
\showISBNx{978-3-540-45047-4}


\bibitem[\protect\citeauthoryear{Deisenroth and Rasmussen}{Deisenroth and Rasmussen}{2011}]%
        {deisenroth2011pilco}
\bibfield{author}{\bibinfo{person}{Marc Deisenroth} {and} \bibinfo{person}{Carl~E Rasmussen}.} \bibinfo{year}{2011}\natexlab{}.
\newblock \showarticletitle{PILCO: A model-based and data-efficient approach to policy search}. In \bibinfo{booktitle}{\emph{Proceedings of the 28th International Conference on machine learning (ICML-11)}}. \bibinfo{pages}{465--472}.
\newblock


\bibitem[\protect\citeauthoryear{Duan, Chen, Houthooft, Schulman, and Abbeel}{Duan et~al\mbox{.}}{2016}]%
        {duan2016benchmarking}
\bibfield{author}{\bibinfo{person}{Yan Duan}, \bibinfo{person}{Xi Chen}, \bibinfo{person}{Rein Houthooft}, \bibinfo{person}{John Schulman}, {and} \bibinfo{person}{Pieter Abbeel}.} \bibinfo{year}{2016}\natexlab{}.
\newblock \showarticletitle{Benchmarking deep reinforcement learning for continuous control}. In \bibinfo{booktitle}{\emph{International conference on machine learning}}. PMLR, \bibinfo{pages}{1329--1338}.
\newblock


\bibitem[\protect\citeauthoryear{Fujimoto, Hoof, and Meger}{Fujimoto et~al\mbox{.}}{2018}]%
        {fujimoto2018addressing}
\bibfield{author}{\bibinfo{person}{Scott Fujimoto}, \bibinfo{person}{Herke Hoof}, {and} \bibinfo{person}{David Meger}.} \bibinfo{year}{2018}\natexlab{}.
\newblock \showarticletitle{Addressing function approximation error in actor-critic methods}. In \bibinfo{booktitle}{\emph{International conference on machine learning}}. PMLR, \bibinfo{pages}{1587--1596}.
\newblock


\bibitem[\protect\citeauthoryear{Gangopadhyay, Dasgupta, and Dey}{Gangopadhyay et~al\mbox{.}}{2023}]%
        {gangopadhyay2023counterexample}
\bibfield{author}{\bibinfo{person}{Briti Gangopadhyay}, \bibinfo{person}{Pallab Dasgupta}, {and} \bibinfo{person}{Soumyajit Dey}.} \bibinfo{year}{2023}\natexlab{}.
\newblock \showarticletitle{Counterexample-Guided Policy Refinement in Multi-Agent Reinforcement Learning}. In \bibinfo{booktitle}{\emph{Proceedings of the 2023 International Conference on Autonomous Agents and Multiagent Systems}}. \bibinfo{pages}{1606--1614}.
\newblock


\bibitem[\protect\citeauthoryear{Garc{\i}a and Fern{\'a}ndez}{Garc{\i}a and Fern{\'a}ndez}{2015}]%
        {garcia2015comprehensive}
\bibfield{author}{\bibinfo{person}{Javier Garc{\i}a} {and} \bibinfo{person}{Fernando Fern{\'a}ndez}.} \bibinfo{year}{2015}\natexlab{}.
\newblock \showarticletitle{A comprehensive survey on safe reinforcement learning}.
\newblock \bibinfo{journal}{\emph{Journal of Machine Learning Research}} \bibinfo{volume}{16}, \bibinfo{number}{1} (\bibinfo{year}{2015}), \bibinfo{pages}{1437--1480}.
\newblock


\bibitem[\protect\citeauthoryear{Giacobbe, Hasanbeig, Kroening, and Wijk}{Giacobbe et~al\mbox{.}}{2021}]%
        {giacobbe2021shielding}
\bibfield{author}{\bibinfo{person}{M Giacobbe}, \bibinfo{person}{Mohammadhosein Hasanbeig}, \bibinfo{person}{Daniel Kroening}, {and} \bibinfo{person}{Hjalmar Wijk}.} \bibinfo{year}{2021}\natexlab{}.
\newblock \showarticletitle{Shielding atari games with bounded prescience}. In \bibinfo{booktitle}{\emph{Proceedings of the International Joint Conference on Autonomous Agents and Multiagent Systems, AAMAS}}.
\newblock


\bibitem[\protect\citeauthoryear{Goodall and Belardinelli}{Goodall and Belardinelli}{2023a}]%
        {goodallb2023approximate}
\bibfield{author}{\bibinfo{person}{Alexander~W Goodall} {and} \bibinfo{person}{Francesco Belardinelli}.} \bibinfo{year}{2023}\natexlab{a}.
\newblock \showarticletitle{Approximate Model-Based Shielding for Safe Reinforcement Learning}.
\newblock In \bibinfo{booktitle}{\emph{ECAI 2023}}. \bibinfo{publisher}{IOS Press}, \bibinfo{pages}{883--890}.
\newblock


\bibitem[\protect\citeauthoryear{Goodall and Belardinelli}{Goodall and Belardinelli}{2023b}]%
        {goodalla2023approximate}
\bibfield{author}{\bibinfo{person}{Alexander~W Goodall} {and} \bibinfo{person}{Francesco Belardinelli}.} \bibinfo{year}{2023}\natexlab{b}.
\newblock \showarticletitle{Approximate Shielding of Atari Agents for Safe Exploration}.
\newblock \bibinfo{journal}{\emph{arXiv preprint arXiv:2304.11104}} (\bibinfo{year}{2023}).
\newblock


\bibitem[\protect\citeauthoryear{Ha and Schmidhuber}{Ha and Schmidhuber}{2018}]%
        {ha2018world}
\bibfield{author}{\bibinfo{person}{David Ha} {and} \bibinfo{person}{J\"{u}rgen Schmidhuber}.} \bibinfo{year}{2018}\natexlab{}.
\newblock \showarticletitle{Recurrent World Models Facilitate Policy Evolution}. In \bibinfo{booktitle}{\emph{Advances in Neural Information Processing Systems}}, \bibfield{editor}{\bibinfo{person}{S.~Bengio}, \bibinfo{person}{H.~Wallach}, \bibinfo{person}{H.~Larochelle}, \bibinfo{person}{K.~Grauman}, \bibinfo{person}{N.~Cesa-Bianchi}, {and} \bibinfo{person}{R.~Garnett}} (Eds.), Vol.~\bibinfo{volume}{31}. \bibinfo{publisher}{Curran Associates, Inc.}
\newblock
\urldef\tempurl%
\url{https://proceedings.neurips.cc/paper_files/paper/2018/file/2de5d16682c3c35007e4e92982f1a2ba-Paper.pdf}
\showURL{%
\tempurl}


\bibitem[\protect\citeauthoryear{Hafner, Lillicrap, Fischer, Villegas, Ha, Lee, and Davidson}{Hafner et~al\mbox{.}}{2019}]%
        {hafner2019learning}
\bibfield{author}{\bibinfo{person}{Danijar Hafner}, \bibinfo{person}{Timothy Lillicrap}, \bibinfo{person}{Ian Fischer}, \bibinfo{person}{Ruben Villegas}, \bibinfo{person}{David Ha}, \bibinfo{person}{Honglak Lee}, {and} \bibinfo{person}{James Davidson}.} \bibinfo{year}{2019}\natexlab{}.
\newblock \showarticletitle{Learning latent dynamics for planning from pixels}. In \bibinfo{booktitle}{\emph{International conference on machine learning}}. PMLR, \bibinfo{pages}{2555--2565}.
\newblock


\bibitem[\protect\citeauthoryear{Hafner, Pasukonis, Ba, and Lillicrap}{Hafner et~al\mbox{.}}{2023}]%
        {hafner2023mastering}
\bibfield{author}{\bibinfo{person}{Danijar Hafner}, \bibinfo{person}{Jurgis Pasukonis}, \bibinfo{person}{Jimmy Ba}, {and} \bibinfo{person}{Timothy Lillicrap}.} \bibinfo{year}{2023}\natexlab{}.
\newblock \showarticletitle{Mastering Diverse Domains through World Models}.
\newblock \bibinfo{journal}{\emph{arXiv preprint arXiv:2301.04104}} (\bibinfo{year}{2023}).
\newblock


\bibitem[\protect\citeauthoryear{He, Le{\'o}n, and Belardinelli}{He et~al\mbox{.}}{[n.d.]}]%
        {he2021androids}
\bibfield{author}{\bibinfo{person}{Chloe He}, \bibinfo{person}{Borja~G Le{\'o}n}, {and} \bibinfo{person}{Francesco Belardinelli}.} \bibinfo{year}{[n.d.]}\natexlab{}.
\newblock \showarticletitle{Do androids dream of electric fences? Safety-aware reinforcement learning with latent shielding}. CEUR Workshop Proceedings.
\newblock
\urldef\tempurl%
\url{https://ceur-ws.org/Vol-3087/paper_50.pdf}
\showURL{%
\tempurl}


\bibitem[\protect\citeauthoryear{Huang, Ji, Zhang, Xia, and Yang}{Huang et~al\mbox{.}}{2023}]%
        {huang2023safe}
\bibfield{author}{\bibinfo{person}{Weidong Huang}, \bibinfo{person}{Jiaming Ji}, \bibinfo{person}{Borong Zhang}, \bibinfo{person}{Chunhe Xia}, {and} \bibinfo{person}{Yaodong Yang}.} \bibinfo{year}{2023}\natexlab{}.
\newblock \showarticletitle{Safe DreamerV3: Safe Reinforcement Learning with World Models}.
\newblock \bibinfo{journal}{\emph{arXiv preprint arXiv:2307.07176}} (\bibinfo{year}{2023}).
\newblock


\bibitem[\protect\citeauthoryear{Jansen, K{\"o}nighofer, Junges, Serban, and Bloem}{Jansen et~al\mbox{.}}{2020}]%
        {jansen2020safe}
\bibfield{author}{\bibinfo{person}{Nils Jansen}, \bibinfo{person}{Bettina K{\"o}nighofer}, \bibinfo{person}{Sebastian Junges}, \bibinfo{person}{Alex Serban}, {and} \bibinfo{person}{Roderick Bloem}.} \bibinfo{year}{2020}\natexlab{}.
\newblock \showarticletitle{Safe reinforcement learning using probabilistic shields}. In \bibinfo{booktitle}{\emph{31st International Conference on Concurrency Theory (CONCUR 2020)}}. Schloss-Dagstuhl-Leibniz Zentrum f{\"u}r Informatik.
\newblock


\bibitem[\protect\citeauthoryear{Ji and Filieri}{Ji and Filieri}{2023}]%
        {ji2023probabilistic}
\bibfield{author}{\bibinfo{person}{Xiaotong Ji} {and} \bibinfo{person}{Antonio Filieri}.} \bibinfo{year}{2023}\natexlab{}.
\newblock \showarticletitle{Probabilistic Counterexample Guidance for Safer Reinforcement Learning}. In \bibinfo{booktitle}{\emph{International Conference on Quantitative Evaluation of Systems}}. Springer, \bibinfo{pages}{311--328}.
\newblock


\bibitem[\protect\citeauthoryear{Kakade, Krishnamurthy, Lowrey, Ohnishi, and Sun}{Kakade et~al\mbox{.}}{2020}]%
        {kakade2020information}
\bibfield{author}{\bibinfo{person}{Sham Kakade}, \bibinfo{person}{Akshay Krishnamurthy}, \bibinfo{person}{Kendall Lowrey}, \bibinfo{person}{Motoya Ohnishi}, {and} \bibinfo{person}{Wen Sun}.} \bibinfo{year}{2020}\natexlab{}.
\newblock \showarticletitle{Information theoretic regret bounds for online nonlinear control}.
\newblock \bibinfo{journal}{\emph{Advances in Neural Information Processing Systems}}  \bibinfo{volume}{33} (\bibinfo{year}{2020}), \bibinfo{pages}{15312--15325}.
\newblock


\bibitem[\protect\citeauthoryear{Kearns and Singh}{Kearns and Singh}{2002}]%
        {kearns2002near}
\bibfield{author}{\bibinfo{person}{Michael Kearns} {and} \bibinfo{person}{Satinder Singh}.} \bibinfo{year}{2002}\natexlab{}.
\newblock \showarticletitle{Near-optimal reinforcement learning in polynomial time}.
\newblock \bibinfo{journal}{\emph{Machine learning}}  \bibinfo{volume}{49} (\bibinfo{year}{2002}), \bibinfo{pages}{209--232}.
\newblock


\bibitem[\protect\citeauthoryear{Kingma and Welling}{Kingma and Welling}{2013}]%
        {kingma2013auto}
\bibfield{author}{\bibinfo{person}{Diederik~P Kingma} {and} \bibinfo{person}{Max Welling}.} \bibinfo{year}{2013}\natexlab{}.
\newblock \showarticletitle{Auto-encoding variational bayes}.
\newblock \bibinfo{journal}{\emph{arXiv preprint arXiv:1312.6114}} (\bibinfo{year}{2013}).
\newblock


\bibitem[\protect\citeauthoryear{Lee, Agarwal, Dann, and Zhang}{Lee et~al\mbox{.}}{2023}]%
        {lee2023learning}
\bibfield{author}{\bibinfo{person}{Jonathan Lee}, \bibinfo{person}{Alekh Agarwal}, \bibinfo{person}{Christoph Dann}, {and} \bibinfo{person}{Tong Zhang}.} \bibinfo{year}{2023}\natexlab{}.
\newblock \showarticletitle{Learning in pomdps is sample-efficient with hindsight observability}. In \bibinfo{booktitle}{\emph{International Conference on Machine Learning}}. PMLR, \bibinfo{pages}{18733--18773}.
\newblock


\bibitem[\protect\citeauthoryear{Levine}{Levine}{2018}]%
        {levine2018reinforcement}
\bibfield{author}{\bibinfo{person}{Sergey Levine}.} \bibinfo{year}{2018}\natexlab{}.
\newblock \showarticletitle{Reinforcement learning and control as probabilistic inference: Tutorial and review}.
\newblock \bibinfo{journal}{\emph{arXiv preprint arXiv:1805.00909}} (\bibinfo{year}{2018}).
\newblock


\bibitem[\protect\citeauthoryear{Mania, Jordan, and Recht}{Mania et~al\mbox{.}}{2022}]%
        {mania2020active}
\bibfield{author}{\bibinfo{person}{Horia Mania}, \bibinfo{person}{Michael~I Jordan}, {and} \bibinfo{person}{Benjamin Recht}.} \bibinfo{year}{2022}\natexlab{}.
\newblock \showarticletitle{Active learning for nonlinear system identification with guarantees}.
\newblock \bibinfo{journal}{\emph{The Journal of Machine Learning Research}} \bibinfo{volume}{23}, \bibinfo{number}{1} (\bibinfo{year}{2022}), \bibinfo{pages}{1433--1462}.
\newblock


\bibitem[\protect\citeauthoryear{Ono, Pavone, Kuwata, and Balaram}{Ono et~al\mbox{.}}{2015}]%
        {ono2015chance}
\bibfield{author}{\bibinfo{person}{Masahiro Ono}, \bibinfo{person}{Marco Pavone}, \bibinfo{person}{Yoshiaki Kuwata}, {and} \bibinfo{person}{J Balaram}.} \bibinfo{year}{2015}\natexlab{}.
\newblock \showarticletitle{Chance-constrained dynamic programming with application to risk-aware robotic space exploration}.
\newblock \bibinfo{journal}{\emph{Autonomous Robots}}  \bibinfo{volume}{39} (\bibinfo{year}{2015}), \bibinfo{pages}{555--571}.
\newblock


\bibitem[\protect\citeauthoryear{Puterman}{Puterman}{1990}]%
        {puterman1990markov}
\bibfield{author}{\bibinfo{person}{Martin~L Puterman}.} \bibinfo{year}{1990}\natexlab{}.
\newblock \showarticletitle{Markov decision processes}.
\newblock \bibinfo{journal}{\emph{Handbooks in operations research and management science}}  \bibinfo{volume}{2} (\bibinfo{year}{1990}), \bibinfo{pages}{331--434}.
\newblock


\bibitem[\protect\citeauthoryear{Rajeswaran, Mordatch, and Kumar}{Rajeswaran et~al\mbox{.}}{2020}]%
        {rajeswaran2020game}
\bibfield{author}{\bibinfo{person}{Aravind Rajeswaran}, \bibinfo{person}{Igor Mordatch}, {and} \bibinfo{person}{Vikash Kumar}.} \bibinfo{year}{2020}\natexlab{}.
\newblock \showarticletitle{A game theoretic framework for model based reinforcement learning}. In \bibinfo{booktitle}{\emph{International conference on machine learning}}. PMLR, \bibinfo{pages}{7953--7963}.
\newblock


\bibitem[\protect\citeauthoryear{Ray, Achiam, and Amodei}{Ray et~al\mbox{.}}{2019}]%
        {ray2019benchmarking}
\bibfield{author}{\bibinfo{person}{Alex Ray}, \bibinfo{person}{Joshua Achiam}, {and} \bibinfo{person}{Dario Amodei}.} \bibinfo{year}{2019}\natexlab{}.
\newblock \showarticletitle{Benchmarking safe exploration in deep reinforcement learning}.
\newblock \bibinfo{journal}{\emph{arXiv preprint arXiv:1910.01708}} \bibinfo{volume}{7}, \bibinfo{number}{1} (\bibinfo{year}{2019}), \bibinfo{pages}{2}.
\newblock


\bibitem[\protect\citeauthoryear{Schulman, Levine, Abbeel, Jordan, and Moritz}{Schulman et~al\mbox{.}}{2015}]%
        {schulman2015trust}
\bibfield{author}{\bibinfo{person}{John Schulman}, \bibinfo{person}{Sergey Levine}, \bibinfo{person}{Pieter Abbeel}, \bibinfo{person}{Michael Jordan}, {and} \bibinfo{person}{Philipp Moritz}.} \bibinfo{year}{2015}\natexlab{}.
\newblock \showarticletitle{Trust region policy optimization}. In \bibinfo{booktitle}{\emph{International conference on machine learning}}. PMLR, \bibinfo{pages}{1889--1897}.
\newblock


\bibitem[\protect\citeauthoryear{Shperberg, Liu, and Stone}{Shperberg et~al\mbox{.}}{2022}]%
        {shperberg2022learning}
\bibfield{author}{\bibinfo{person}{Shahaf~S Shperberg}, \bibinfo{person}{Bo Liu}, {and} \bibinfo{person}{Peter Stone}.} \bibinfo{year}{2022}\natexlab{}.
\newblock \showarticletitle{Learning a Shield from Catastrophic Action Effects: Never Repeat the Same Mistake}.
\newblock \bibinfo{journal}{\emph{arXiv preprint arXiv:2202.09516}} (\bibinfo{year}{2022}).
\newblock


\bibitem[\protect\citeauthoryear{Song and Sun}{Song and Sun}{2021}]%
        {song2021pc}
\bibfield{author}{\bibinfo{person}{Yuda Song} {and} \bibinfo{person}{Wen Sun}.} \bibinfo{year}{2021}\natexlab{}.
\newblock \showarticletitle{Pc-mlp: Model-based reinforcement learning with policy cover guided exploration}. In \bibinfo{booktitle}{\emph{International Conference on Machine Learning}}. PMLR, \bibinfo{pages}{9801--9811}.
\newblock


\bibitem[\protect\citeauthoryear{Sutton and Barto}{Sutton and Barto}{2018}]%
        {sutton2018reinforcement}
\bibfield{author}{\bibinfo{person}{Richard~S Sutton} {and} \bibinfo{person}{Andrew~G Barto}.} \bibinfo{year}{2018}\natexlab{}.
\newblock \bibinfo{booktitle}{\emph{Reinforcement learning: An introduction}}.
\newblock \bibinfo{publisher}{MIT press}.
\newblock


\bibitem[\protect\citeauthoryear{Thomas, Luo, and Ma}{Thomas et~al\mbox{.}}{2021}]%
        {thomas2021safe}
\bibfield{author}{\bibinfo{person}{Garrett Thomas}, \bibinfo{person}{Yuping Luo}, {and} \bibinfo{person}{Tengyu Ma}.} \bibinfo{year}{2021}\natexlab{}.
\newblock \showarticletitle{Safe reinforcement learning by imagining the near future}.
\newblock \bibinfo{journal}{\emph{Advances in Neural Information Processing Systems}}  \bibinfo{volume}{34} (\bibinfo{year}{2021}), \bibinfo{pages}{13859--13869}.
\newblock


\bibitem[\protect\citeauthoryear{Todorov, Erez, and Tassa}{Todorov et~al\mbox{.}}{2012}]%
        {todorov2012mujoco}
\bibfield{author}{\bibinfo{person}{Emanuel Todorov}, \bibinfo{person}{Tom Erez}, {and} \bibinfo{person}{Yuval Tassa}.} \bibinfo{year}{2012}\natexlab{}.
\newblock \showarticletitle{Mujoco: A physics engine for model-based control}. In \bibinfo{booktitle}{\emph{2012 IEEE/RSJ international conference on intelligent robots and systems}}. IEEE, \bibinfo{pages}{5026--5033}.
\newblock


\bibitem[\protect\citeauthoryear{Tsybakov}{Tsybakov}{2004}]%
        {tsybakov2004introduction}
\bibfield{author}{\bibinfo{person}{Alexandre~B Tsybakov}.} \bibinfo{year}{2004}\natexlab{}.
\newblock \showarticletitle{Introduction to nonparametric estimation, 2009}.
\newblock \bibinfo{journal}{\emph{URL https://doi. org/10.1007/b13794. Revised and extended from the}} \bibinfo{volume}{9}, \bibinfo{number}{10} (\bibinfo{year}{2004}).
\newblock


\bibitem[\protect\citeauthoryear{Wright}{Wright}{2006}]%
        {wright2006numerical}
\bibfield{author}{\bibinfo{person}{Jorge Nocedal Stephen~J Wright}.} \bibinfo{year}{2006}\natexlab{}.
\newblock \bibinfo{title}{Numerical optimization}.
\newblock
\newblock


\bibitem[\protect\citeauthoryear{Xiao, Lyu, and Dolan}{Xiao et~al\mbox{.}}{2023}]%
        {xiao2023model}
\bibfield{author}{\bibinfo{person}{Wenli Xiao}, \bibinfo{person}{Yiwei Lyu}, {and} \bibinfo{person}{John Dolan}.} \bibinfo{year}{2023}\natexlab{}.
\newblock \showarticletitle{Model-based Dynamic Shielding for Safe and Efficient Multi-agent Reinforcement Learning}. In \bibinfo{booktitle}{\emph{Proceedings of the 2023 International Conference on Autonomous Agents and Multiagent Systems}}. \bibinfo{pages}{1587--1596}.
\newblock


\bibitem[\protect\citeauthoryear{Xiong et~al\mbox{.}}{Xiong et~al\mbox{.}}{2023}]%
        {xiong2023provably}
\bibfield{author}{\bibinfo{person}{Nuoya Xiong} {et~al\mbox{.}}} \bibinfo{year}{2023}\natexlab{}.
\newblock \showarticletitle{Provably Safe Reinforcement Learning with Step-wise Violation Constraints}.
\newblock \bibinfo{journal}{\emph{arXiv preprint arXiv:2302.06064}} (\bibinfo{year}{2023}).
\newblock


\bibitem[\protect\citeauthoryear{Yang, Marra, Rens, and De~Raedt}{Yang et~al\mbox{.}}{2023}]%
        {yang2023safe}
\bibfield{author}{\bibinfo{person}{Wen-Chi Yang}, \bibinfo{person}{Giuseppe Marra}, \bibinfo{person}{Gavin Rens}, {and} \bibinfo{person}{Luc De~Raedt}.} \bibinfo{year}{2023}\natexlab{}.
\newblock \showarticletitle{Safe Reinforcement Learning via Probabilistic Logic Shields}.
\newblock \bibinfo{journal}{\emph{arXiv preprint arXiv:2303.03226}} (\bibinfo{year}{2023}).
\newblock


\end{thebibliography}

\onecolumn

\clearpage
\newpage

\appendix

\section{Algorithms}
\label{sec:algorithms}
\begin{algorithm}[!htb]
\caption{DreamerV3 \cite{hafner2023mastering} with AMBS \cite{goodallb2023approximate}}
\raggedright
\label{alg:ambs}
\textbf{Initialise:} replay buffer $\mathcal D$ with $R$ random episodes and RSSM parameters $\theta$ randomly.\\
\begin{algorithmic}
\While{not converged}
\State \textit{// World model learning}
\State Sample a batch $B$ of transition sequences $ \{ \langle o_t, a_t, r_t, c_t, \gamma_t, o_{t+1} \rangle^{k+H}_{t=k} \} \sim \mathcal{D}$.
\State Compute the components $h_{t}, z_t, \hat z_{t}, \hat r_t, \hat c_t, \hat \gamma$ and update the RSSM parameters $\theta$ with maximum likelihood.
\State \textit{// Task policy optimisation}
\State Using the task policy $\pi_{\text{task}}$ `imagine' sequences $\{h_{t:t+H}, \hat z_{t:t+H}, \hat r_{t:t+H}, \hat c_{t:t+H}, \hat \gamma_{t:t+H}  \}$ from every starting observation $o_t \in B$.
\State Update the parameters of $\pi_{\text{task}}$ with the penalty policy gradient (Eq.~\ref{eq:penaltypolicygradient}) or PLPG (Eq.\ref{eq:plpg}) or COPT (Eq.~\ref{eq:copt}) .
\State \textit{// Safety critic optimisation}
\State Train the TD3-style \cite{fujimoto2018addressing} safety critics $v^C_1$ and $v^C_2$ with maximum likelihood to estimate $\mathbb{E}_{\pi_{\text{task}}}[ \sum^{\infty}_{t=1} \gamma^{t-1} \cdot c_t ]$.
\State \textit{// Safe policy optimisation}
\State Using the safe policy $\pi_{\text{safe}}$ `imagine' sequences $\{h_{t:t+H}, \hat z_{t:t+H}, \hat r_{t:t+H}, \hat c_{t:t+H}, \hat \gamma_{t:t+H}  \}$ from every starting observation $o_t \in B$.
\State Update the parameters of $\pi_{\text{safe}}$ with the policy gradient, i.e.~$\nabla \mathcal{J} = \mathbb{E}_{\pi_{\text{safe}}} \Big[ \sum^H_{t=1} G^C_t \cdot \nabla \log \pi_{\text{safe}}(a_t \mid s_t)\Big]$.
\State \textit{// Environment interaction}
\For{$t = 1, ..., L$} 
\State From $o_t$ compute $\hat s_t = (z_t, h_t)$ and sample an action $a \sim \pi_{\text{task}}$ with the task policy.
\State With Alg.~\ref{alg:shield} check if $s_t \models \mathbb{P}_{\geq 1 -  \Delta}(\square^{\leq n} \Psi )$ and play the shielded action $a'$ in the environment.
\State Observe $r_t, o_{t+1}$ and $L(s_t)$ from the environment and compute $c_t = C$ if $s_t \models \Psi$ else $c_t=0$.
\State Append $\langle o_t, a_t, r_t, c_t, \gamma_t, o_{t+1} \rangle$ to $\mathcal{D}$.
\EndFor
\EndWhile
\end{algorithmic}
\end{algorithm}

\begin{algorithm}[!htb]
\caption{Shielding Procedure (AMBS) \cite{goodallb2023approximate}}
\raggedright
\label{alg:shield}
\textbf{Input:} approximation error $\epsilon$, desired safety level $\Delta$, current state $\hat s = (z, h)$, proposed action $a$, `imagination' horizon $H$, look-ahead shielding horizon $T$, number of samples $m$, RSSM $p_{\theta}$, safe policy $\pi_{\text{safe}}$, task policy $\pi_{\text{task}}$ and safety critics $v^C_1$ and $v^C_2$ \\
\textbf{Output:} shielded action $a'$.
\begin{algorithmic}
\State \textit{// Sample Traces}
\For{$i = 1, ..., m$}
    \State From $\hat s_0 = (z, h)$ play $a_0 = a$ and using the task policy $\pi_{\text{task}}$ `imagine' sequences $\{h_{1:H}, \hat z_{1:H}, \hat r_{1:H}, \hat c_{1:H}, \hat \gamma_{1:H}  \}$.
    \State \textit{// Check whether each trace is satisfying}
    \State Compute $\textrm{cost}(\tau) = \sum^H_{t=1} (\hat \gamma_t)^{t-1} \hat c_t \;\; \text{\it or if safety critics} \;\; \textrm{cost}(\tau) = \sum^{H-1}_{t=1} (\hat\gamma_t)^{t-1} \hat c_t + \min\left\{v^C_1(h_H, \hat z_H), v^C_2(h_H, \hat z_H)  \right\}$.
    \State $X_i = \mathbbm{1}\left[\text{cost}(\tau) < \gamma^{T-1} \cdot C \right]$.
\EndFor
\State Let $\tilde \mu_{s \models \phi} = \frac{1}{m}\sum^{m}_{i=1} X_i$
\State \textbf{If} $\tilde \mu_{s \models \phi} \in [1 - \Delta + \epsilon, 1]$ \textbf{return} $a' = a$ \textbf{else} \textbf{return} $a' \sim \pi_{\text{safe}}$
\end{algorithmic}
\end{algorithm}

\section{Proofs}
\label{sec:proofs}

\subsection{Proof of Theorem \ref{prop:boundonm}}
The proof is almost identical to the one presented in \cite{goodallb2023approximate}, we provide it here for completeness.
\begin{restatedthm}{\ref{prop:boundonm} (restated)}

Let $\epsilon > 0$, $\delta > 0$, $s \in S$ be given. With access to the true transition system $\mathcal{T}$, with probability $1 - \delta$ we can obtain an $\epsilon$-approximate estimate of the measure $\mu_{s\models\phi}$, by sampling $m$ traces $\tau \sim \mathcal{T}$, provided that,

\begin{equation*}
    m \geq \frac{1}{2\epsilon^2} \log\left(\frac{2}{\delta}\right) 
\end{equation*}
\end{restatedthm}
\begin{proof}
We estimate $\mu_{s\models\phi}$ by sampling $m$ traces $\langle\tau_j\rangle^m_{j=1}$ from $\mathcal{T}$. Let $X_1, ..., X_m$ be indicator r.v.s such that,

\begin{equation*}
    X_j = \begin{cases}
        1 & \text{if $\tau_j \models \square^{\leq n} \Psi$,}\\
        0 & \text{otherwise}
    \end{cases}
\end{equation*}
Let,
\begin{equation*}
    \tilde \mu_{s\models\phi} = \frac{1}{m} \sum^m_{j=1}X_j, \; \text{where} \;\: \mathbb{E}_{\mathcal{T}}[\tilde \mu_{s\models\phi}] = \mu_{s\models\phi} 
\end{equation*}
Then by Hoeffding's inequality,
$$\mathbb{P}\left[|\tilde \mu_{s\models\phi} - \mu_{s\models\phi} | \geq \epsilon \right]\leq 2\exp\left( - 2 m \epsilon^2 \right)$$
Bounding the RHS from above with $\delta$ and rearranging gives the desired bound.\\

\noindent It remains to argue that $\tau_j \models \square^{\leq n} \Psi$ is easily checkable. Indeed this is the case (for polynomial $n$) because in the fully observable setting we have access to the state and so we can check that $\forall i \; \tau[i] \models \Psi$.
\end{proof}

\subsection{Proof of Theorem \ref{prop:kl}}

Rather than operate directly with KL divergence, for our purposes it is convenient to consider the variation distance (or total variation distance). The variation distance is a distance metric between two probability distributions that satisfies the triangle inequality. Whereas, the KL divergence can be interpreted as the information lost by approximating $\mathcal{T}(\cdot \mid s)$ with $\widehat{\mathcal{T}}(\cdot \mid s)$. The variation distance and KL divergence are linked by Pinsker's inequality \cite{tsybakov2004introduction}. \\

\noindent First consider a measurable space $(\Omega, \mathcal{F})$, where $\Omega$ may be an uncountable set (e.g. the reals), let $P$ and $Q$ be probability measures on the space $(\Omega, \mathcal{F})$, then the variation distance is defined as,
\begin{equation*}
    D_{\textrm{VAR}}(P, Q) = \sup_{A \in \mathcal{F}} \vert P(A) - Q(A)\vert
\end{equation*}
This is the most general definition. However for specific $\Omega$ we might consider equivalent definitions. For example, when $\Omega = \mathbb{R}$, we could consider the following definition,
\begin{equation*}
    D_{\textrm{VAR}}(P, Q) = \frac{1}{2} \int^{\infty}_{-\infty} |p(x) - q(x)|dx
\end{equation*}
Similarly, when $\Omega$ is finite or countably finite, we may consider the following definition related to the $L^{1}$ norm,
\begin{equation*}
    D_{\textrm{VAR}}(P, Q) = \Vert P - Q \Vert_1 = \frac{1}{2}\sum_{\omega \in \Omega} |P(\{\omega\}) - Q(\{ \omega\})|
\end{equation*}
We split the proof of Theorem \ref{prop:kl} into two parts. First we present the {\em error amplification} lemma \cite{rajeswaran2020game} adapted to our purposes, before providing the proof of Theorem \ref{prop:kl}.

\begin{lemma}[error amplification] Let $\mathcal{T}$ and $\widehat{\mathcal{T}}$ be two transition systems over the same same set of states $S$, and with the same initial state distribution $\iota_{init} (\cdot)$. Let $\mathcal{T}^t$ and $\widehat{\mathcal{T}}^t$ be the marginal state distribution at time $t$ for the transitions systems $\mathcal{T}$ and $\widehat{\mathcal{T}}$ respectively. Specifically let,
\begin{eqnarray*}
    \mathcal{T}^t(s) = \mathbb{P}_{\tau \sim \mathcal{T}}[\tau[t] = s] \\
    \widehat{\mathcal{T}}^t(s) = \mathbb{P}_{\tau \sim \widehat{\mathcal{T}}}[\tau[t] = s]
\end{eqnarray*}
Suppose we have,
\begin{equation*}
	D_{KL}\innerp*{\mathcal{T}(\cdot \mid s) ; \widehat{\mathcal{T}}(\cdot \mid s) } \leq \alpha \; \forall s \in S 
\end{equation*}
then the variation distance between marginal distributions $\mathcal{T}^t$ and $\widehat{\mathcal{T}}^t$ is bounded as follows,
\begin{equation*}
    D_{\textrm{VAR}}\left(\mathcal{T}^t,  \widehat{\mathcal{T}}^t\right) \leq t \cdot \sqrt{\frac{1}{2}\alpha}
\end{equation*}
\label{lemma:marginal}
\end{lemma}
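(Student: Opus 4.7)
The plan is to prove the lemma by induction on $t$, using Pinsker's inequality to pass from the per-state KL bound to a per-state total variation bound, and then a telescoping argument to propagate the error across $t$ steps.

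First, I would apply Pinsker's inequality pointwise: for every $s \in S$,
\begin{equation*}
D_{\textrm{VAR}}\bigl(\mathcal{T}(\cdot \mid s),\, \widehat{\mathcal{T}}(\cdot \mid s)\bigr) \leq \sqrt{\tfrac{1}{2}\, D_{KL}\innerp*{\mathcal{T}(\cdot \mid s);\widehat{\mathcal{T}}(\cdot \mid s)}} \leq \sqrt{\tfrac{1}{2}\alpha}.
\end{equation*}
This reduces the problem to working entirely with variation distance, which has the triangle inequality and a clean integral representation.

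Next I would set up the induction. The base case $t=0$ is immediate since both marginals coincide with $\iota_{init}$. For the inductive step, I would write the one-step evolution of each marginal as
\begin{equation*}
\mathcal{T}^{t+1}(s') = \int_S \mathcal{T}(s' \mid s)\, \mathcal{T}^t(s)\, ds, \qquad \widehat{\mathcal{T}}^{t+1}(s') = \int_S \widehat{\mathcal{T}}(s' \mid s)\, \widehat{\mathcal{T}}^t(s)\, ds,
\end{equation*}
and add/subtract a mixed term $\int_S \widehat{\mathcal{T}}(s'\mid s)\, \mathcal{T}^t(s)\, ds$ to decompose the difference into (i) a ``kernel error'' piece $\int [\mathcal{T}(s'\mid s) - \widehat{\mathcal{T}}(s'\mid s)]\, \mathcal{T}^t(s)\, ds$ and (ii) a ``marginal error'' piece $\int \widehat{\mathcal{T}}(s'\mid s)[\mathcal{T}^t(s) - \widehat{\mathcal{T}}^t(s)]\, ds$. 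Integrating $|\cdot|$ over $s'$, applying Fubini, and noting that $\widehat{\mathcal{T}}(\cdot \mid s)$ is a probability density (so its $L^1$ norm is $1$), the two pieces are bounded by $\mathbb{E}_{s\sim \mathcal{T}^t}[D_{\textrm{VAR}}(\mathcal{T}(\cdot\mid s),\widehat{\mathcal{T}}(\cdot \mid s))]$ and $D_{\textrm{VAR}}(\mathcal{T}^t, \widehat{\mathcal{T}}^t)$ respectively.

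Putting these together with the pointwise Pinsker bound and the inductive hypothesis yields
\begin{equation*}
D_{\textrm{VAR}}\bigl(\mathcal{T}^{t+1}, \widehat{\mathcal{T}}^{t+1}\bigr) \leq \sqrt{\tfrac{1}{2}\alpha} + t\sqrt{\tfrac{1}{2}\alpha} = (t+1)\sqrt{\tfrac{1}{2}\alpha},
\end{equation*}
closing the induction. The main technical obstacle I anticipate is being careful in the continuous-state manipulation: justifying Fubini/the swap of $\int ds'$ and $\int ds$, interpreting the $L^1$ expression for $D_{\textrm{VAR}}$ consistently (with the factor of $\tfrac{1}{2}$), and verifying that the ``kernel error'' piece does reduce to an expectation of per-state variation distances rather than something larger. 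Beyond that, the argument is essentially the classical simulation-lemma telescoping, and no deeper structural assumption on $S$ is needed.
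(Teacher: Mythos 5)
Your proposal is correct and follows essentially the same route as the paper's proof: the add/subtract decomposition into a kernel-error term and a marginal-error term, the triangle inequality, induction on $t$, and Pinsker's inequality to convert the per-state KL bound into a per-state variation-distance bound. The only cosmetic differences are that you apply Pinsker before the telescoping rather than after, and you bound the kernel-error term by an expectation under $\mathcal{T}^t$ where the paper uses a maximum over states; both yield the same final bound.
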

\begin{proof}
First we fix some $s \in S$, we can bound the absolute difference between the probabilities $\mathcal{T}^t(s)$ and $\widehat{\mathcal{T}}^t(s)$ as follows,
\begin{align*}
        \left\vert \mathcal{T}^t(s) - \widehat{\mathcal{T}}^t(s) \right\vert & = \bigg\vert \int_{\bar s \in S} \mathcal{T}(s \mid \bar s) \mathcal{T}^{t-1}(\bar s) d \bar s - \int_{\bar s \in S} \widehat{\mathcal{T}}(s \mid \bar s) \widehat{\mathcal{T}}^{t-1}(\bar s) d \bar s   \bigg\vert \\
        &\leq \int_{\bar s \in S} \left\vert \mathcal{T}(s \mid \bar s) \mathcal{T}^{t-1}(\bar s) - \widehat{\mathcal{T}}(s \mid \bar s) \widehat{\mathcal{T}}^{t-1}(\bar s)\right\vert d \bar s\\
        & \leq \int_{\bar s \in S} \bigg( \left\vert \mathcal{T}(s \mid \bar s) \mathcal{T}^{t-1}(\bar s) - \mathcal{T}(s \mid \bar s) \widehat{\mathcal{T}}^{t-1}(\bar s) \right\vert \\
        & \qquad + \left\vert \mathcal{T}(s \mid \bar s) \widehat{\mathcal{T}}^{t-1}(\bar s) - \widehat{\mathcal{T}}(s \mid \bar s) \widehat{\mathcal{T}}^{t-1}(\bar s) \right\vert
        \bigg) d \bar s \\
        & = \int_{\bar s \in S} \bigg( \mathcal{T}(s \mid \bar s) \left\vert \mathcal{T}^{t-1}(\bar s) - \widehat{\mathcal{T}}^{t-1}(\bar s) \right\vert \\
        & \qquad + \widehat{\mathcal{T}}^{t-1}(\bar s) \left\vert \mathcal{T}(s \mid \bar s) - \widehat{\mathcal{T}}(s \mid \bar s)\right\vert  \bigg) d \bar s\\
\end{align*}
where the first inequality is straightforward, the second inequality comes from simultaneously adding and subtracting the term $\mathcal{T}(s \mid \bar s) \widehat{\mathcal{T}}^{t-1}(\bar s)$ and applying the triangle inequality. Now we can bound the variation distance for the full marginal distributions  $\mathcal{T}^t$ and $\widehat{\mathcal{T}}^t$,
\begin{align*}
    D_{\textrm{VAR}}\left(\mathcal{T}^t,  \widehat{\mathcal{T}}^t\right) &= \frac{1}{2}\int_{s \in S} \left\vert \mathcal{T}^t(s) - \widehat{\mathcal{T}}^t(s) \right\vert ds \\
    &\leq \frac{1}{2}\int_{s \in S} \Bigg( \int_{\bar s \in S} \bigg( \mathcal{T}(s \mid \bar s) \left\vert \mathcal{T}^{t-1}(\bar s) - \widehat{\mathcal{T}}^{t-1}(\bar s) \right\vert \\
    & \qquad + \widehat{\mathcal{T}}^{t-1}(\bar s) \left\vert \mathcal{T}(s \mid \bar s) - \widehat{\mathcal{T}}(s \mid \bar s)\right\vert \bigg) d \bar s\Bigg) ds \qquad \text{(using the previous bound)}\\
    & \leq \frac{1}{2} \int_{\bar s \in S} \left( \left\vert \mathcal{T}^{t-1}(\bar s) - \widehat{\mathcal{T}}^{t-1}(\bar s) \right\vert \int_{s \in S} \mathcal{T}(s \mid \bar s) ds \right) d \bar s \\
    & \qquad + \frac{1}{2} \int_{\bar s \in S} \left(  \widehat{\mathcal{T}}^{t-1}(\bar s) \int_{s \in S} \left\vert \mathcal{T}(s \mid \bar s) - \widehat{\mathcal{T}}(s \mid \bar s)\right\vert ds \right) d \bar s \\
    & \leq D_{\textrm{VAR}}\left(\mathcal{T}^{t-1},  \widehat{\mathcal{T}}^{t-1}\right) + \max_{\bar s \in S} \left\{ D_{\textrm{VAR}}\left( \mathcal{T}(\cdot \mid \bar s), \widehat{\mathcal{T}}(\cdot \mid \bar s) \right) \right\} \\
    & \leq t \cdot \max_{\bar s \in S} \left\{ D_{\textrm{VAR}}\left( \mathcal{T}(\cdot \mid \bar s), \widehat{\mathcal{T}}(\cdot \mid \bar s) \right) \right\} \qquad \text{(induction on $t$)}\\
    & \leq t \cdot \sqrt{\frac{1}{2} D_{KL}\innerp*{\mathcal{T}(\cdot \mid s) ; \widehat{\mathcal{T}}(\cdot \mid s) }} \qquad \text{(using Pinsker's ineq.)}\\
    & \leq t \cdot \sqrt{\frac{1}{2}\alpha} \qquad \text{(main assum.)}\\
\end{align*}
where the second inequality comes from rearranging the integrals and the third inequality comes from the fact that some of the integrals sum to 1. The induction on $t$ is also not immediately straightforward, although if we consider the base case i.e.~$D_{\textrm{VAR}}\left(\mathcal{T}^{1},  \widehat{\mathcal{T}}^{1}\right) \leq   \max_{\bar s \in S} \left\{ D_{\textrm{VAR}}\left( \mathcal{T}(\cdot \mid \bar s), \widehat{\mathcal{T}}(\cdot \mid \bar s) \right) \right\}$, then it should not be too hard to see.
\end{proof}
\begin{restatedthm}{\ref{prop:kl} (restated)}
Suppose that for all $s \in S$, the Kullback-Leibler (KL) divergence
and $\widehat{\mathcal{T}}(\cdot \mid s)$ is upper-bounded by some $\alpha \leq \epsilon^2/(2 n^2)$. That is,
\begin{equation*}
	D_{KL}\innerp*{\mathcal{T}(\cdot \mid s) ; \widehat{\mathcal{T}}(\cdot \mid s) } \leq \alpha \; \forall s \in S 
\end{equation*}
Now fix an $s \in S$ and let $\epsilon > 0$, $\delta> 0$ be given. With probability $1 - \delta$ we can obtain an $\epsilon$-approximate estimate of the measure $\mu_{s\models\phi}$, by sampling $m$ traces $\tau \sim \widehat{\mathcal{T}}$, provided that,
\begin{equation*}
    m \geq \frac{2}{\epsilon^2} \log\left(\frac{2}{\delta}\right)
\end{equation*}
\end{restatedthm}
\begin{proof}
First recall the following definition,
\begin{equation*}
    \mu_{s \models \phi} = \mu_s(\{\tau \mid \tau[0] = s, \text{ for all }0 \leq i \leq n, \tau[i] \models \Psi\})
\end{equation*}
where $\tau \sim \mathcal{T}$. Equivalently we can write,
\begin{equation*}
    \mu_{s \models \phi} = \mathbb{P}_{\tau \sim \mathcal{T}}[\tau \models \square^{\leq n} \Psi \mid \tau[0] = s ]
\end{equation*}
Similarly, let $\hat \mu_{s \models \phi}$ be defined as the {\em true} probability under $\widehat{\mathcal{T}}$,
\begin{equation*}
    \hat \mu_{s \models \phi} = \mathbb{P}_{\tau \sim \widehat{\mathcal{T}}}[\tau \models \square^{\leq n} \Psi \mid \tau[0] = s]
\end{equation*}
We first want to show that the absolute difference between $\mu_{s \models \phi}$ and $\hat \mu_{s \models \phi}$ is upper bounded by $\epsilon/2$. Then we want to estimate $\hat \mu_{s \models \phi}$ up to $\epsilon/2$ error, which immediately gives us an $\epsilon$-approximation for $\mu_{s \models \phi}$. Let the following denote the average state distribution for $\mathcal{T}$ and $\widehat{\mathcal{T}}$ respectively,
\begin{eqnarray*}
    \rho_{\mathcal{T}}(\bar s) = \frac{1}{n}\sum^n_{i=1}\mathbb{P}_{\tau \sim \mathcal{T}}[\tau[i] = \bar s \mid \tau[0] = s] \\
    \rho_{\widehat{\mathcal{T}}}(\bar s) = \frac{1}{n}\sum^n_{i=1}\mathbb{P}_{\tau \sim \widehat{\mathcal{T}}}[\tau[i] = \bar s \mid \tau[0] = s] \\
\end{eqnarray*}
We obtain an upper bound for the absolute difference between $\mu_{s \models \phi}$ and $\hat \mu_{s \models \phi}$ by adapting the simulation lemma \cite{kearns2002near} as follows,
\begin{align*}
    \big\vert \mu_{s \models \phi} - \hat \mu_{s \models \phi} \big\vert & = \big\vert \mathbb{P}_{\tau \sim \mathcal{T}}[\tau \models \square^{\leq n} \Psi \mid \tau[0] = s ] - \mathbb{P}_{\tau \sim \widehat{\mathcal{T}}}[\tau \models \square^{\leq n} \Psi \mid \tau[0] = s ] \big\vert \\
    & \leq  1 \cdot D_{TV}(\rho_{\mathcal{T}}, \rho_{\widehat{\mathcal{T}}}) \qquad \text{(using the simulation lemma)} \\
    & = \frac{1}{2} \int_{\bar s \in S} \left\vert \rho_{\mathcal{T}}(\bar s) - \rho_{\widehat{\mathcal{T}}}(\bar s) \right\vert d \bar s\\
    & = \frac{1}{2n} \int_{\bar s \in S} \left\vert \sum^n_{i=1}  \mathbb{P}_{\tau \sim \mathcal{T}}[\tau[i] = \bar s \mid \tau[0] = s] - \mathbb{P}_{\tau \sim \widehat{\mathcal{T}}}[\tau[i] = \bar s \mid \tau[0] = s] \right\vert d \bar s \\
    & \leq \frac{1}{2n} \sum^n_{i=1} \int_{\bar s \in S}  \left\vert \mathbb{P}_{\tau \sim \mathcal{T}}[\tau[i] = \bar s \mid \tau[0] = s] - \mathbb{P}_{\tau \sim \widehat{\mathcal{T}}}[\tau[i] = \bar s \mid \tau[0] = s] \right\vert d \bar s \\
    & \leq \frac{1}{n} \sum^n_{i=1} \frac{1}{2}\int_{\bar s \in S}  \left\vert \mathcal{T}^i(\bar s) - \widehat{\mathcal{T}}^i(\bar s) \right\vert d\bar s \qquad \text{(by defn. see Lemma \ref{lemma:marginal})} \\
    & \leq \frac{1}{n} \sum^n_{i=1} n \cdot \sqrt{\frac{1}{2}\alpha} \qquad \text{(using Lemma \ref{lemma:marginal} and $ 1 \leq i \leq n$)} \\
    & = n \cdot \sqrt{\frac{1}{2}\alpha}
\end{align*}
Provided that $\alpha \leq \epsilon^2/(2 n^2)$ we now have $\big\vert \mu_{s \models \phi} - \hat \mu_{s \models \phi} \big\vert  \leq \epsilon/2$. It remains to obtain an $\epsilon/2$-approximation of $\hat \mu_{s \models \phi}$. With the exact same reasoning as in Theorem \ref{prop:boundonm}, we estimate $\hat \mu_{s \models \phi}$ by sampling $m$ traces $\langle\tau_j\rangle^m_{j=1}$ from $\widehat{\mathcal{T}}$, then provided,
\begin{equation*}
    m \geq \frac{2}{\epsilon^2} \log\left(\frac{2}{\delta}\right)
\end{equation*}
we obtain an $\epsilon/2$-approximation of $\hat \mu_{s \models \phi}$ with probability $1 - \delta$, which completes the proof.
\end{proof}

\subsection{Proof of Theorem \ref{prop:pomdp}}
\begin{restatedthm}{\ref{prop:pomdp} Restated}  Let $b_t$ be a latent representation (belief state) such that $p(s_t \mid o_{t\leq t}, a_{\leq t}) = p(s_t \mid b_t)$. Let the fixed policy $\pi(\cdot \mid b_t)$ be a general probability distribution conditional on belief states $b_t$. Let $f$ be a generic $f$-divergence measure (e.g., KL divergence). Then the following holds:
\begin{equation}
    D_{f}(\mathcal{T}(s' \mid b ), \widehat{\mathcal{T}}(s' \mid b )) \leq D_{f}(\mathcal{T}(b' \mid b ), \widehat{\mathcal{T}}(b' \mid b ))
\end{equation}
where $\mathcal{T}$ and $\widehat{\mathcal{T}}$ are the true and approximate transition system respectively, defined now over both states $s$ and belief states $b$.
\end{restatedthm}

\begin{proof}
The proof follows from \cite{goodallb2023approximate}. For clarity let the following probabilities be defined as follows,
\begin{align*}
    \mathcal{T}(s' \mid b) = \mathbb{P}_{\pi, p}[s_t = s' \mid b_{t-1} = b] \qquad & \qquad \mathcal{T}(b' \mid b) = \mathbb{P}_{\pi, p}[b_t = b' \mid b_{t-1} = b] \\ 
    \widehat{\mathcal{T}}(s' \mid b) = \mathbb{P}_{\pi, \hat p}[s_t = s' \mid b_{t-1} = b] \qquad & \qquad \widehat{\mathcal{T}}(b' \mid b) = \mathbb{P}_{\pi, \hat p}[b_t = b' \mid b_{t-1} = b]
\end{align*}
We can immediately define conditional and joint probabilities (e.g.~$\mathcal{T}(s', b' \mid b)$, $\mathcal{T}(s' \mid b)$, $\mathcal{T}(b' \mid s', b)$ and similarly for $\widehat{\mathcal{T}}$) using the standard laws of probability. We now apply the data-processing inequality \cite{ali1966general} for $f$-divergences to prove the upper bound,
\begin{align*}
    &D_{f}(\mathcal{T}(b' \mid b), \widehat{\mathcal{T}}(b' \mid b )) \\  
    &= \mathbb{E}_{b' \sim \widehat{\mathcal{T}}}\left[ f \left( \frac{\mathcal{T}(b' \mid b)}{\widehat{\mathcal{T}}(b' \mid b)}\right) \right] \\
    & = \mathbb{E}_{s', b' \sim \widehat{\mathcal{T}}}\left[ f \left( \frac{\mathcal{T}(s', b' \mid b)}{\widehat{\mathcal{T}}(s', b' \mid b)}\right) \right] \\
    & = \mathbb{E}_{s' \sim \widehat{\mathcal{T}}}\left[  \mathbb{E}_{b' \sim \widehat{\mathcal{T}}} f \left( \frac{\mathcal{T}(s', b' \mid b)}{\widehat{\mathcal{T}}(s', b' \mid b)}\right) \right] \\
    & \geq \mathbb{E}_{s' \sim \widehat{\mathcal{T}}}\left[ f \left( \mathbb{E}_{b' \sim \widehat{\mathcal{T}}}  \frac{\mathcal{T}(s', b' \mid b)}{\widehat{\mathcal{T}}(s', b' \mid b)}\right) \right] \qquad \text{(Jensen's)} \\
    & = \mathbb{E}_{s' \sim \widehat{\mathcal{T}}}\left[ f \left( \mathbb{E}_{b' \sim \widehat{\mathcal{T}}}  \frac{\mathcal{T}(s', b' \mid b) \widehat{\mathcal{T}}(b' \mid s', b)}{\widehat{\mathcal{T}}(s', b' \mid b)\mathcal{T}(b' \mid s', b)}\right) \right] \\
    & = \mathbb{E}_{s' \sim \widehat{\mathcal{T}}}\left[ f \left( \mathbb{E}_{b' \sim \widehat{\mathcal{T}}}  \frac{\mathcal{T}(s' \mid b)}{\widehat{\mathcal{T}}(s' \mid b)}\right) \right] \\
    & = \mathbb{E}_{s' \sim \widehat{\mathcal{T}}}\left[ f \left( \frac{\mathcal{T}(s' \mid b)}{\widehat{\mathcal{T}}(s' \mid b)}\right) \right] \\
    & = D_{f}(\mathcal{T}(s' \mid b ), \widehat{\mathcal{T}}(s' \mid b ))
\end{align*}
\end{proof}

\section{Derivations}
\label{sec:derivations}
\subsection{Derivation of Eq.~\ref{eq:plpg}}
We recall the policy gradient,
\begin{equation*}
    \nabla \mathcal{J} = \mathbb{E}_{\pi_{\text{task}}} \Big[ \sum^H_{t=1} G_t \cdot \nabla \log \pi_{\text{task}}(a_t \mid s_t)\Big]
\end{equation*}
Recall the re-normalising coefficient of the returns $G_t$,
\begin{equation*}
    \frac{\mathbb{P}_{\pi_{\text{safe}}}\left[s_t \models \square^{\leq H} \Psi \mid s_t, a_t \right]}{\mathbb{P}_{\pi_{\text{safe}}}\left[s_t \models \square^{\leq H} \Psi \mid s_t\right]}
\end{equation*}
We follow the control as inference framework \cite{levine2018reinforcement}, which introduces optimality random variables $\mathcal{O}_t \in \{0, 1\}$, where $\mathcal{O}_t = 1$ denotes that the action $a_t$ picked from state $s_t$ was optimal; the probability $p(\mathcal{O}_t = 1) = \exp(R(s_t, a_t))$. This is a valid probability for reward functions where $\forall (s, a) \; R(s, a) \leq 0$. \\

\noindent Now consider the following cost function: $c_t = 0$ if $s_t\models \Psi$ and $c_t = -C$ otherwise (for some $C > 0$). We introduce the safety random variables $\mathcal{S}_t \in \{0, 1\}$, where $\mathcal{S}_t = 1$ denotes that the action $a_t$ picked from state $s_t$ is safe. And $p(\mathcal{S}_t = 1) = \exp(c_t)$. Letting $V^{\pi_{\text{safe}}}_C(s) = \mathbb{E}_{\pi_{\text{safe}}}[\sum^{n}_{t=1} \gamma^{t-1} \cdot c_t \mid s_0 = s]$, we make the following observation,
\begin{align*}
    \frac{\mathbb{P}_{\pi_{\text{safe}}}\left[s_t \models \square^{\leq H} \Psi \mid s_t, a_t \right]}{\mathbb{P}_{\pi_{\text{safe}}}\left[s_t \models \square^{\leq H} \Psi \mid s_t\right]} & = \frac{\exp(c(s_t, a_t) + V^{\pi_{\text{safe}}}_C(s_{t+1}))}{\exp(V^{\pi_{\text{safe}}}_C(s_t))} \\
    & = \exp(c(s_t, a_t) + V^{\pi_{\text{safe}}}_C(s_{t+1}) - V^{\pi_{\text{safe}}}_C(s_t) ) \\
    & = \exp(\delta^{\text{safe}}_{t})
\end{align*}
where $\delta^{\text{safe}}_{t} = c(s_t, a_t) + V^{\pi_{\text{safe}}}_C(s_{t+1}) - V^{\pi_{\text{safe}}}_C(s_t)$. Now consider the penalty term $- \alpha \log \mathbb{P}_{\pi_{\text{task}}}[s_t \models \square^{\leq H} \Psi | s_t]$. Under the same control as inference framework we make the following observation,
\begin{align*}
    \nabla \log \mathbb{P}_{\pi_{\text{task}}}[s_t \models \square^{\leq H} \Psi | s_t] &= \nabla \log(\exp (V^{\pi_{\text{task}}}_C(s_t))) \\
    &= \nabla (V^{\pi_{\text{task}}}_C(s_t)) \\
    & \propto \mathbb{E}_{\pi_{\text{task}}} \left[ G_t^C \cdot \nabla \log \pi_{\text{task}}(a_t \: | \: s_t) \right] \qquad \text{(Policy Gradient thm.)}
\end{align*}
where $G_t^C$ is the cost return of the task policy and $V^{\pi_{\text{task}}}_C(s) = \mathbb{E}_{\pi_{\text{task}}}[\sum^{n}_{t=1} \gamma^{t-1} \cdot c_t \mid s_0 = s]$. Putting this all together gives us the following policy gradient, 
\begin{equation*}
	\nabla \mathcal{J} = \mathbb{E}_{\pi_{\text{task}}}\Big[  \sum^{\infty}_{t=0} \big( \exp(\delta^{\text{safe}}_{t}) \cdot G_t - \alpha \cdot G_t^C \cdot \big) \nabla\log \pi_{\text{task}}(a_t \: | \: s_t)  \Big]
\end{equation*}

\section{Hyperparameters}
\label{sec:hyperparameters}
\begin{table}[!htb]
    \begin{center}
    {\caption{DreamerV3 hyperparameters \cite{hafner2023mastering}. Other methods built on DreamerV3 such as AMBS \cite{goodallb2023approximate} and LAG use these hyperparameters unless otherwise specified.}
    \label{tab:dreamerv3}}   
    \begin{tabular}{| m{5cm} | P{3cm} | P{3cm} |}
    \hline
       Name  & Symbol & Value \\
       \hline
       \multicolumn{3}{|c|}{General}\\
       \hline
         Replay capacity& - & $10^6$\\
         Batch size & $B$ & 64\\
         Batch length & - & 16\\
         Number of envs & - & 8 \\
         Train ratio & - & 512 \\
         Number of encoder/decoder MLP layers & - & 2 \\
         Number of encoder/decoder MLP units & - & 1024 \\
         Activation & - & LayerNorm + SiLU \\
         \hline
         \multicolumn{3}{|c|}{World Model}\\
         \hline
         Number of latents & - & 32\\
         Classes per latent & - & 32\\
         Number of layers & - & 4 \\
         Number of hidden units & - & 768 \\
         Number of recurrent units & - & 2048 \\
         CNN depth & - & 64\\
         RSSM loss scales & $\beta_{\text{pred}}$, $\beta_{\text{dyn}}$, $\beta_{\text{rep}}$ & 1.0, 0.5, 0.1\\
         Predictor loss scales & $\beta_{o}, \beta_r, \beta_c, \beta_{\gamma}$  & 1.0, 1.0, 1.0, 1.0\\
         Learning rate & - & $10^{-4}$ \\
         Adam epsilon & $\epsilon_{\text{adam}}$& $10^{-8}$\\
         Gradient clipping & - & 1000 \\
         \hline
         \multicolumn{3}{|c|}{Actor Critic}\\
         \hline
         Imagination horizon & H & 15\\
         Discount factor & $\gamma$ & 0.997 \\
         TD lambda & $\lambda$ & 0.95\\
         Critic EMA decay & - & 0.98 \\
         Critic EMA regulariser & -&1\\
         Return norm. scale & $S_{\text{reward}}$ & $\text{Per}(R, 95) - \text{Per}(R, 5) $ \\
         Return norm. limit & $L_{\text{reward}}$ & 1\\
         Return norm. decay & - & 0.99\\
         Actor entropy scale & $\eta_{\text{actor}}$ & $3 \cdot 10^{-4}$\\
         Learning rate & - & $3 \cdot 10^{-5}$\\
         Adam epsilon & $\epsilon_{\text{adam}}$& $10^{-5}$\\
         Gradient clipping & - & 100 \\
         \hline
    \end{tabular}
    \end{center}
\end{table}

\begin{table}[!htb]
    \begin{center}
    {\caption{Hyperparameters for AMBS \cite{goodallb2023approximate} with penalty critic, PLPG or COPT policy gradients.}
    \label{tab:ambs}}
    \begin{tabular}{| m{5cm} | P{3cm} | P{3cm} |}
    \hline
       Name  & Symbol & Value \\
       \hline
       \multicolumn{3}{|c|}{Shielding}\\
       \hline
         Safety level& $\Delta$ & 0.1\\
         Approximation error & $\epsilon$ & 0.09\\
         Number of samples & $m$ & 512\\
         Failure probability & $\delta$ & 0.01 \\
         Look-ahead/shielding horizon & $T$ & 30\\
         Cost Value & $C$ & 10\\
         \hline
         \multicolumn{3}{|c|}{Policy Gradient}\\
         \hline
         Penalty coefficient & $\alpha$ & 1.0\\
         PLPG penalty coefficient & $\alpha_{\text{(PLPG)}}$ & 0.8 \\
         COPT penalty coefficient & $\alpha_{\text{(COPT)}}$ & 1.0 \\
         COPT sigmoid scale & $\kappa$ & 10.0\\
         \hline
         \multicolumn{3}{|c|}{Safe Policy}\\
         \hline
         \multicolumn{3}{|c|}{See `Actor Critic' in Table \ref{tab:dreamerv3}} \\
         \multicolumn{3}{|c|}{...} \\
         \hline
         \multicolumn{3}{|c|}{Safety Critic}\\
         \hline
         Type & - & TD3-style \cite{fujimoto2018addressing}\\
         Slow update frequency & - & 1\\
         Slow update fraction & - & 0.02\\
         EMA decay & - & 0.98 \\
         EMA regulariser & -&1\\
         Cost norm. scale & $S_{\text{cost}}$ & $\text{Per}(R, 95) - \text{Per}(R, 5) $ \\
         Cost norm. limit & $L_{\text{cost}}$ & 1\\
         Cost norm. decay & - & 0.99\\
         Learning rate & - & $3 \cdot 10^{-5}$\\
         Adam epsilon & $\epsilon_{\text{adam}}$& $10^{-5}$\\
         Gradient clipping & - & 100 \\
         \hline
         \multicolumn{3}{|c|}{Penalty Critic}\\
         \hline
         Type & - & v-function \cite{hafner2023mastering}\\
         \hline
         \multicolumn{3}{|c|}{See `Safety Critic' for the remaining} \\
         \multicolumn{3}{|c|}{...} \\
         \hline
    \end{tabular}
    \end{center}
\end{table}

\begin{table}[!htb]
    \begin{center}
    {\caption{LAG hyperparameters \cite{ray2019benchmarking}. We use the default hyperparameters provided in \cite{huang2023safe} for the safety gym benchmark \cite{ray2019benchmarking}. }
    \label{tab:lcpo}}
    \begin{tabular}{| m{5cm} | P{3cm} | P{3cm} |}
    \hline
       Name  & Symbol & Value \\
       \hline
       \multicolumn{3}{|c|}{Augmented Lagrangian}\\
    \hline
        Penalty multiplier & $\mu_k$ & $5 \cdot 10^{-9}$ \\
        Initial Lagrange multiplier & $\lambda^k$ & $0.01$ \\
        Penalty power & $\sigma$ & $10^{-6}$ \\
        Cost Value & $C$ & 1.0 \\
        Cost Threshold & $d$ & $1.0$ \\
        \hline
         \multicolumn{3}{|c|}{Penalty Critic}\\
         \hline
         Type & - & v-function \cite{hafner2023mastering}\\
         \hline
         \multicolumn{3}{|c|}{See `Safety Critic' in Table \ref{tab:ambs}} \\
         \multicolumn{3}{|c|}{...} \\
         \hline
    \end{tabular}
    \end{center}
\end{table}

\clearpage
\newpage

\section{The Augmented Lagrangian}
\label{sec:augmentedlagrangian}
First we define the following objective functions,
\begin{eqnarray*}
    J(\pi) & = & \mathbb{E}_{\pi}\left[ \sum^{\infty}_{t=1}  \gamma^{t-1} \cdot r_t \right] \\
    J_C(\pi) & = & \mathbb{E}_{\pi}\left[ \sum^{\infty}_{t=1}  \gamma^{t-1} \cdot c_t \right] \\
\end{eqnarray*}
We now present the following CMDP objective function,
\begin{eqnarray*}
    \max_{\pi} J(\pi) & \text{s.t.} & J_C(\pi) < d
\end{eqnarray*}
where $d$ is the cost threshold. We now present the Augmented Lagrangian with proximal relaxation \cite{wright2006numerical}. First we note that,
\begin{equation*}
    \max_{\pi} \min_{\lambda \geq 0} \big[ J(\pi) - \lambda \left(J_C(\pi) - d\right)  \big] = \max_{\pi} \begin{cases}
        J(\pi) & \text{if $J_C(\pi) < d$}\\
        -\infty & \text{otherwise}
    \end{cases}
\end{equation*}
This is an equivalent form for the CMDP objective, since if $\pi$ is feasible i.e.~$J_C(\pi) < d$, then the maximum value over $\lambda$ is $\lambda=0$. However, if $\pi$ is not feasible then $\lambda$ can be arbitrary large to solve this equation. In particular, this equation is non-smooth when moving between feasible and infeasible policies. Thus, the following relaxation is used,
\begin{equation*}
    \max_{\pi} \min_{\lambda \geq 0} \left[J(\pi) - \lambda \left(J_C(\pi) - d\right) + \frac{1}{\mu_k}(\lambda - \lambda_k)^2 \right]
\end{equation*}
where $\mu_k$ is a non-decreasing penalty multiplier that depends on the gradient step $k$. The new term encourages $\lambda$ to stay close to the previous estimate $\lambda_k$, giving a smooth approximation of the Lagrangian. Taking the derivative w.r.t.~$\lambda$ gives the following update step for $\lambda$,
\begin{equation*}
    \lambda_{k+1} = \begin{cases}
        \lambda_k + \mu_k(J_C(\pi) - d) & \text{if $\lambda_k + \mu_k(J_C(\pi) - d)\geq 0$}\\
        0 & \text{otherwise}
        
                    \end{cases}
\end{equation*}
The penalty multiplier $\mu_k$ is updated as follows in a non-decreasing fashion with each gradient step,
\begin{equation*}
    \mu_{k+1} = \max \{ (\mu_k)^{1+\sigma}, 1 \}
\end{equation*}
for some small fixed penalty power $\sigma$. To optimise the policy $\pi$ we take gradient steps of the following unconstrained objective,
\begin{equation*}
    \tilde J(\pi, \lambda_k, \mu_k) = J(\pi) - \Psi^C(\pi, \lambda_k, \mu_k)
\end{equation*}
where,
\begin{equation*}
    \Psi^C(\pi, \lambda_k, \mu_k) = \begin{cases}
        \lambda_k(J^C(\pi) - d) + \frac{\mu_k}{2}(J^C(\pi) - d)^2 & \text{if $\lambda_k + \mu_k(J_C(\pi) - d)\geq 0$} \\
        -\frac{(\lambda_k)^2}{2 \mu_k} & \text{otherwise}
    \end{cases}
\end{equation*}
In our implementation, the objective functions $J(\pi)$ and $J_C(\pi)$ are estimated with the bootstrapped TD-$\lambda$ returns $R^{\lambda}_t$ and $C^{\lambda}_t$ respectively. Where,

\begin{eqnarray*}
    R^{\lambda}(s_t) & = &\begin{cases}
        \hat r_t + \gamma ((1- \lambda) \hat V(s_{t+1}) + \lambda R^{\lambda}(s_{t+1})) & \text{if $t < H$} \\
        \hat V(s_t) & \text{if $t=H$}
    \end{cases}\\
    C^{\lambda}(s_t) & = &\begin{cases}
        \hat c_t + \gamma ((1- \lambda) \hat V_C(s_{t+1}) + \lambda C^{\lambda}(s_{t+1})) & \text{if $t < H$} \\
        \hat V_C(s_t) & \text{if $t=H$}
    \end{cases}
\end{eqnarray*}
where $\hat r_t$ and $\hat c_t$ are reward and cost estimates resp., and  $\hat V(s_t)$ and $\hat V_C(s_t)$ are value estimates for the return and cost return resp. These estimates are computed from the corresponding components of the world model \cite{hafner2019learning, hafner2023mastering} and critics.
\clearpage
\newpage

\section{Additional Results}
\subsection{Comparison with DreamerV3}
\label{sec:additionalresults}

\begin{figure}[!hbt]
    \centering
    \begin{subfigure}{0.235\textwidth}
         \centering
         \includegraphics[width=\textwidth]{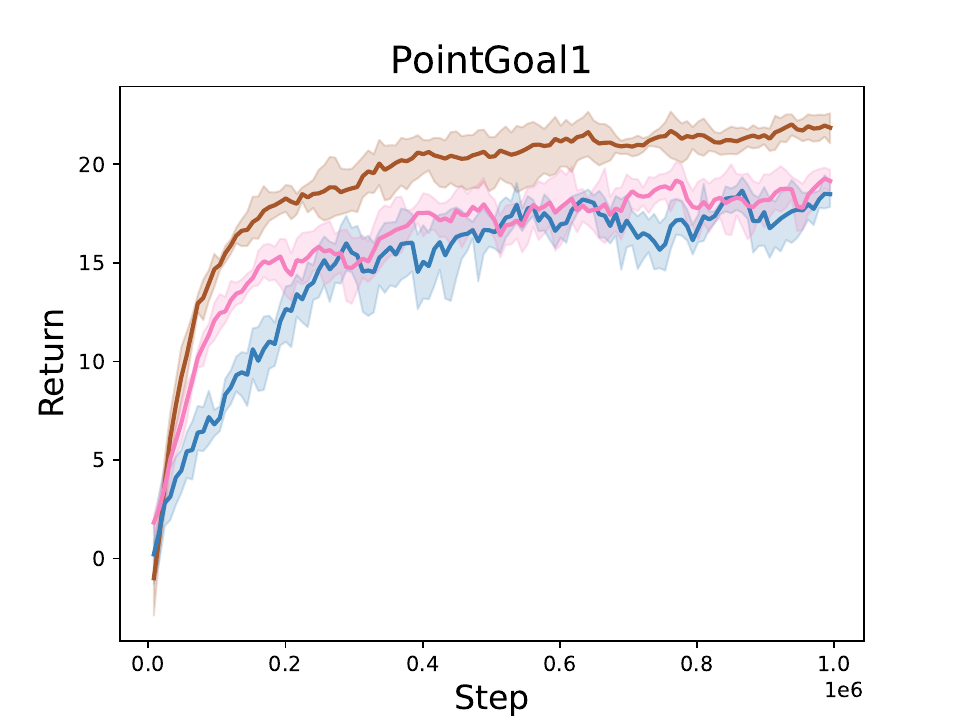}
     \end{subfigure}
     \hfill
     \begin{subfigure}{0.235\textwidth}
         \centering
         \includegraphics[width=\textwidth]{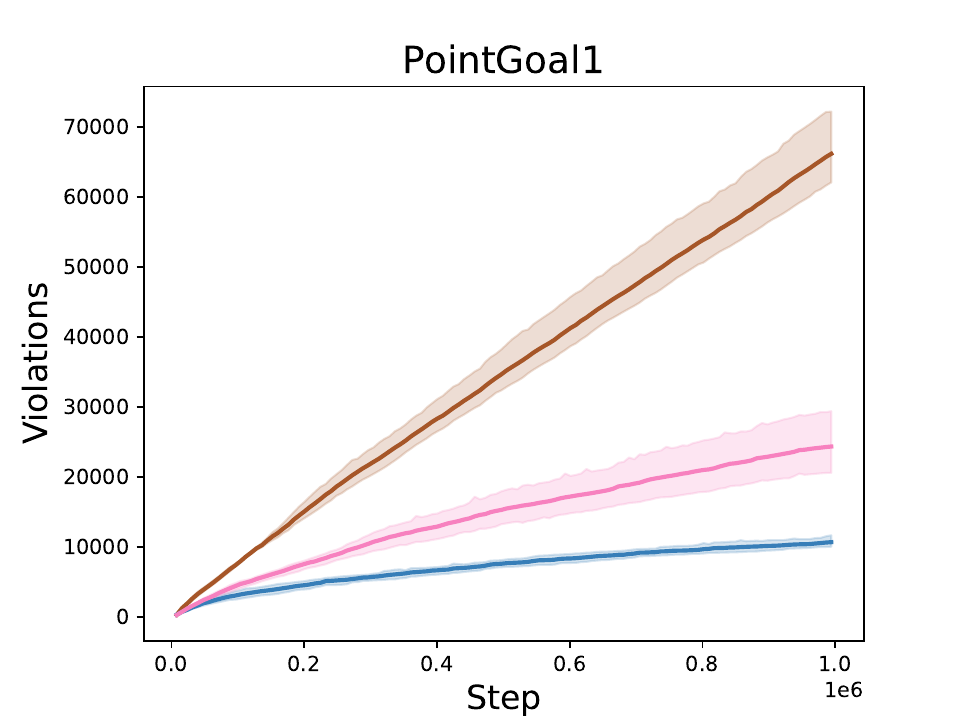}
     \end{subfigure}
     \hfill
     \begin{subfigure}{0.235\textwidth}
         \centering
         \includegraphics[width=\textwidth]{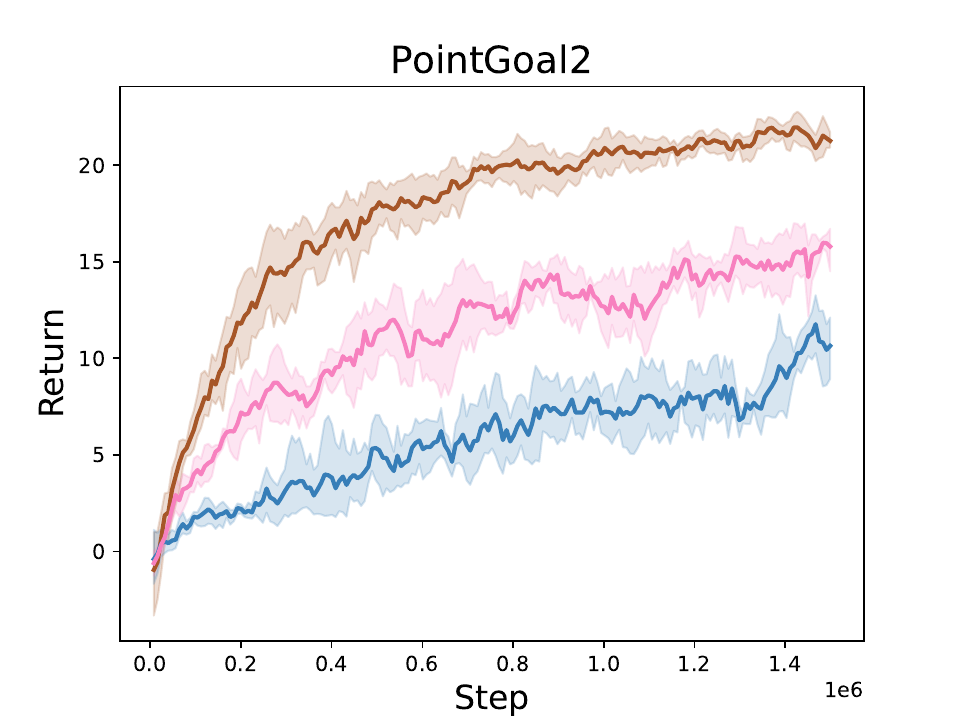}
     \end{subfigure}
     \hfill
     \begin{subfigure}{0.235\textwidth}
         \centering
         \includegraphics[width=\textwidth]{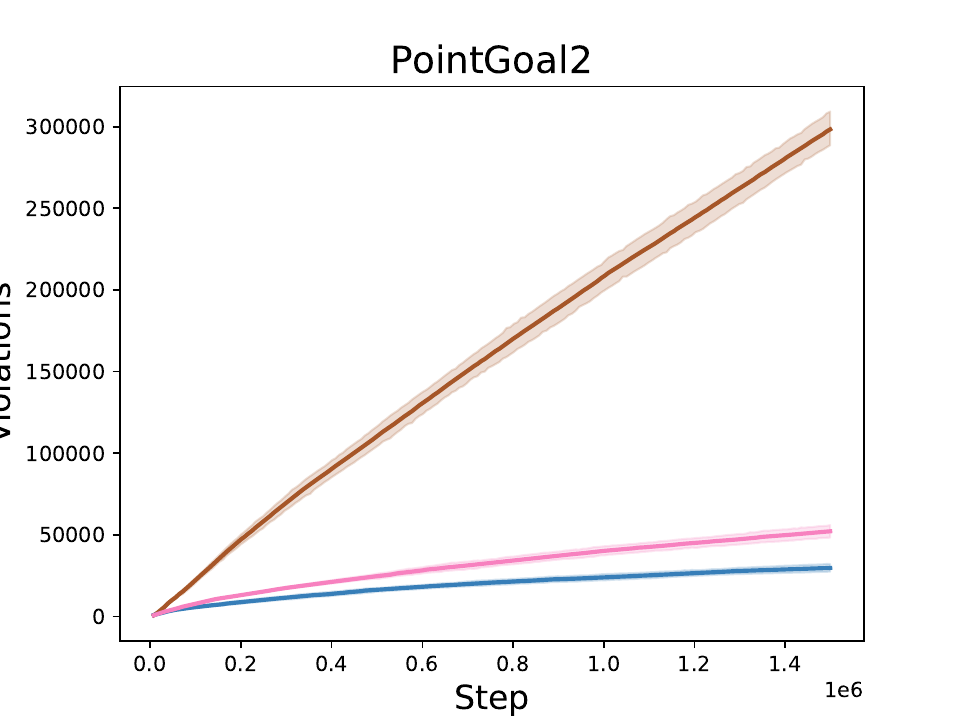}
     \end{subfigure}
     \hfill
     \begin{center}
     \begin{subfigure}{0.235\textwidth}
         \centering
         \includegraphics[width=\textwidth]{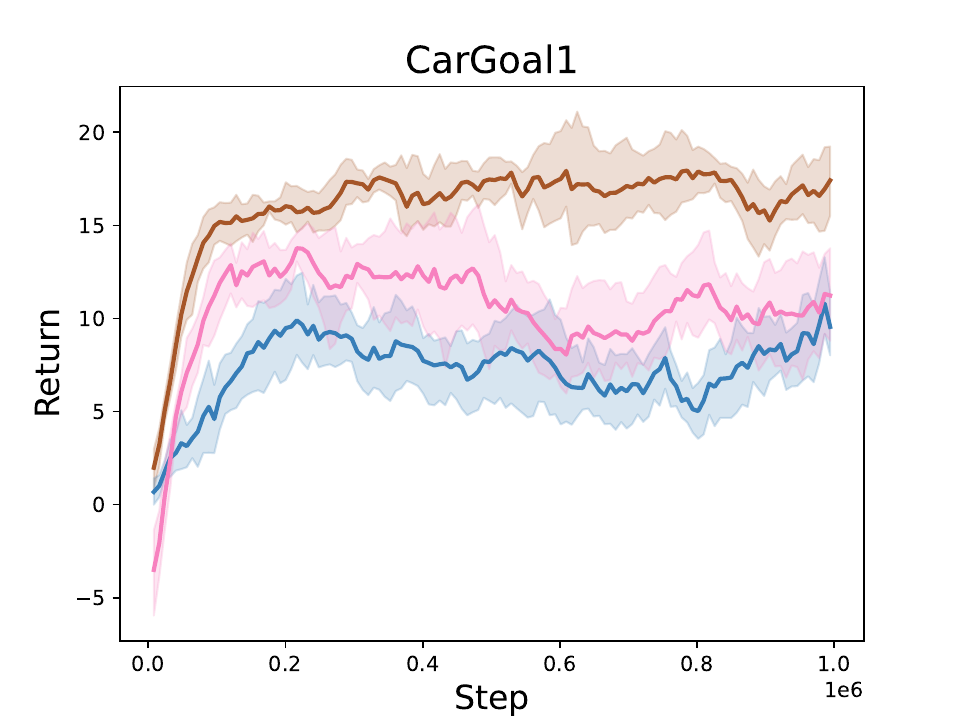}
     \end{subfigure}
     \begin{subfigure}{0.235\textwidth}
         \centering
         \includegraphics[width=\textwidth]{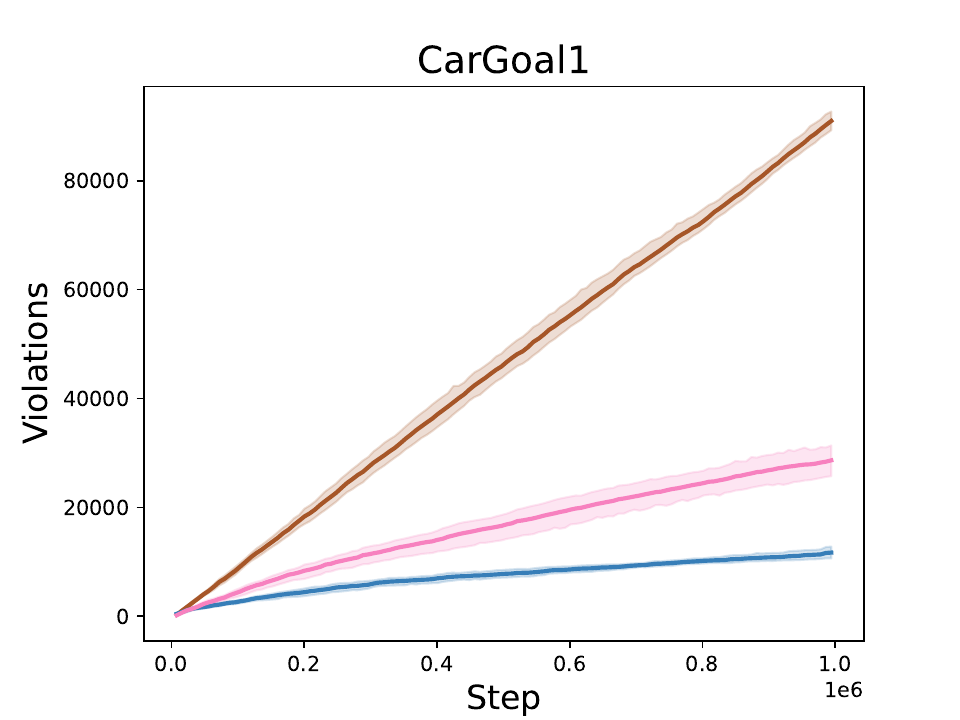}
     \end{subfigure}
     \end{center}
     \hfill
     \newline
     \begin{subfigure}{0.45\textwidth}
         \centering
         \includegraphics[width=\textwidth]{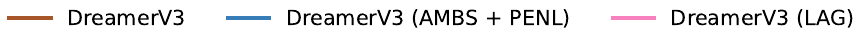}
     \end{subfigure}
    \caption{Comparison between DreamerV3 \cite{hafner2023mastering}, DreamerV3 + LAG and AMBS \cite{goodallb2023approximate} with penalty critic. Episode return (left) and cumulative violations (right) for the three separate algorithms on three Safety Gym environments.}
\end{figure}

\subsection{Comparison with AMBS}
\begin{figure}[!hbt]
    \centering
     \begin{center}
     \begin{subfigure}{0.235\textwidth}
         \centering
         \includegraphics[width=\textwidth]{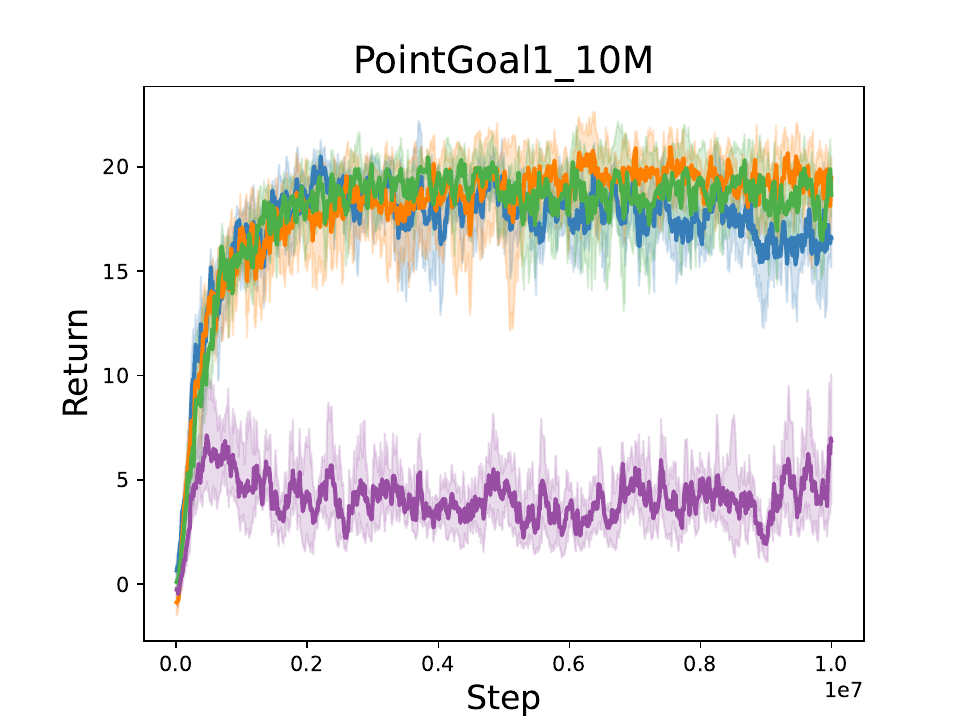}
     \end{subfigure}
     \begin{subfigure}{0.235\textwidth}
         \centering
         \includegraphics[width=\textwidth]{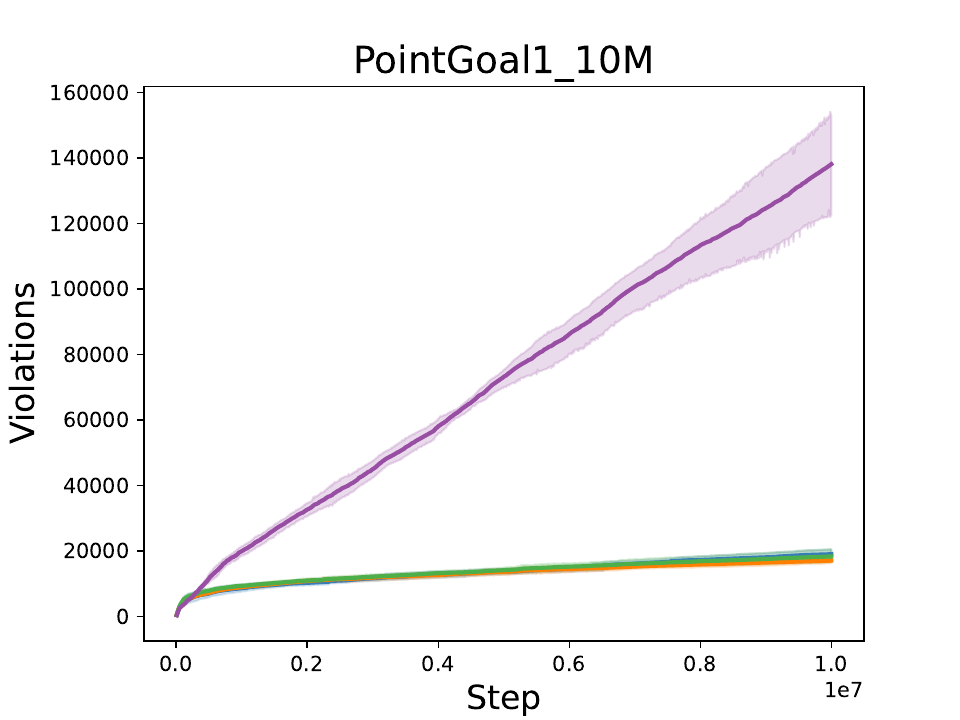}
     \end{subfigure}
     \end{center}
     \hfill
     \newline
     \begin{subfigure}{0.45\textwidth}
         \centering
         \includegraphics[width=\textwidth]{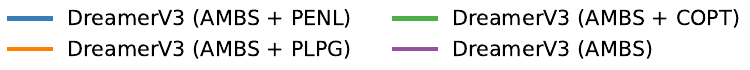}
     \end{subfigure}
    \caption{Comparison between AMBS \cite{goodallb2023approximate} and the three penalty techniques, PENL, COPT and PLPG. Episode return (left) and cumulative violations (right) for the PointGoal1 (10M frames) environment.}
\end{figure}

\clearpage
\newpage

\section{Environment Description}
\label{sec:safetygym}

\subsection{Safety Gym}

Safety Gym \cite{ray2019benchmarking} uses the OpenAI Gym interface \cite{brockman2016openai} for environment interaction. It is built on the advanced physics simulator, MuJoCo \cite{todorov2012mujoco}. In our experiments we use three separate environments: {\em PointGoal1}, {\em PointGoal2} and {\em CarGoal1}. The first two environment use the {\em Point} vehicle for navigation in the environment and the third environment uses the {\em Car} vehicle. 

\subsection{Vehicles}

Fig.~\ref{fig:vehicles} illustrates the two vehicles: {\em Point} and {\em Car}. The neural network controllers for both the vehicles output vectors of values scaled to the range $[-1, +1]$. We provide descriptions for both vehicles below.

\begin{figure}[!htb]
    \centering
    \begin{minipage}{0.3\textwidth}
    \begin{subfigure}{0.4\textwidth}
        \centering
        \includegraphics[width=\textwidth]{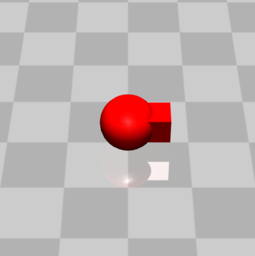}
        \caption{Point}
    \end{subfigure}
    \hfill
    \begin{subfigure}{0.4\textwidth}
        \centering
        \includegraphics[width=\textwidth]{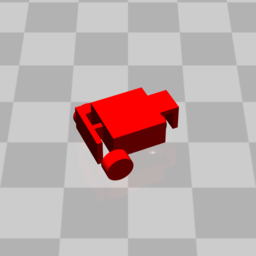}
        \caption{Car}
    \end{subfigure}
    \caption{Safety Gym vehicles.}
    \label{fig:vehicles}
    \end{minipage}
\end{figure}
\noindent \textbf{Point.} A simple vehicle that is constrained to the 2D plane. One actuator controls the rotation of the vehicle the other controls its linear movement. The simple control scheme makes the robot easy to control. The corresponding action space is $[-1, +1]^2$.\\

\noindent \textbf{Car.} This vehicle exhibits more realistic dynamics, with two independently driven wheels. The mechanics are 3D, although the vehicle mostly resides in the 2D plane. The vehicle has a free moving wheel at the front that is not connected to any actuators. The control scheme is a little more advanced since, rotating, and moving forward and backwards require coordination between the two driven wheels. However, the action space is identical, i.e.~$[-1, +1]^2$.

\subsection{Tasks and Constraints}

\textbf{Goal.} In all three environments (i.e.~{\em PointGoal1}, {\em PointGoal2} and {\em CarGoal1}) the goal is to navigate to a series of goal positions, indicated to by a big green circle, see Fig.~\ref{fig:goal}. When the goal is achieved a new goal appears randomly elsewhere in the environment and the agent must first locate (visually) and then navigate to the new goal position. Conveniently the green circle is taller than all the other components of the environment so the agent should be able to determine its location by spinning around just once. This is different to lidar observations, where the agent immediately knows the direction of the goal position, even if it is not within view. A reward of $+1$ is obtained on reaching a goal position. The agent must reach as many goal positions as possible within the episode duration (typically $1000$ frames). \\

\noindent \textbf{Constraints.} In addition to reaching goal positions the agent must avoid a series of obstacles that are randomly placed at the start of each episode. For the {\em PointGoal1 } and {\em CarGoal1} environments the agent must only avoid hazardous areas indicated by blue circles, see Fig.~\ref{fig:hazards}. The {\em PointGoal2} environment is harder. Firstly, the environment is bigger, meaning it will take longer to navigate across the environment and there are more obstacles to avoid. In addition, the agent must not only avoid hazardous areas but must also avoid collisions with vases, see Fig.~\ref{fig:vases}. We summarise the environments and their corresponding safety-formula below:
\begin{itemize}
    \item {\em PointGoal1 }: $\Psi = \neg hazard$.
    \item {\em PointGoal2 }: $\Psi = \neg hazard \land \neg collision$.
    \item {\em CarGoal1 }: $\Psi = \neg hazard$
\end{itemize}

\begin{figure}[!h]
    \centering
    \begin{minipage}{0.45\textwidth}
    \begin{subfigure}{0.3\textwidth}
        \centering
        \includegraphics[width=\textwidth]{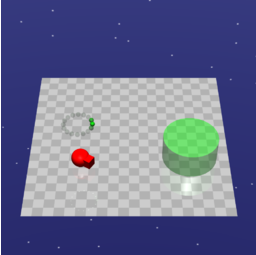}
        \caption{Goal positions}
        \label{fig:goal}
    \end{subfigure}
    \hfill
    \begin{subfigure}{0.3\textwidth}
        \centering
        \includegraphics[width=\textwidth]{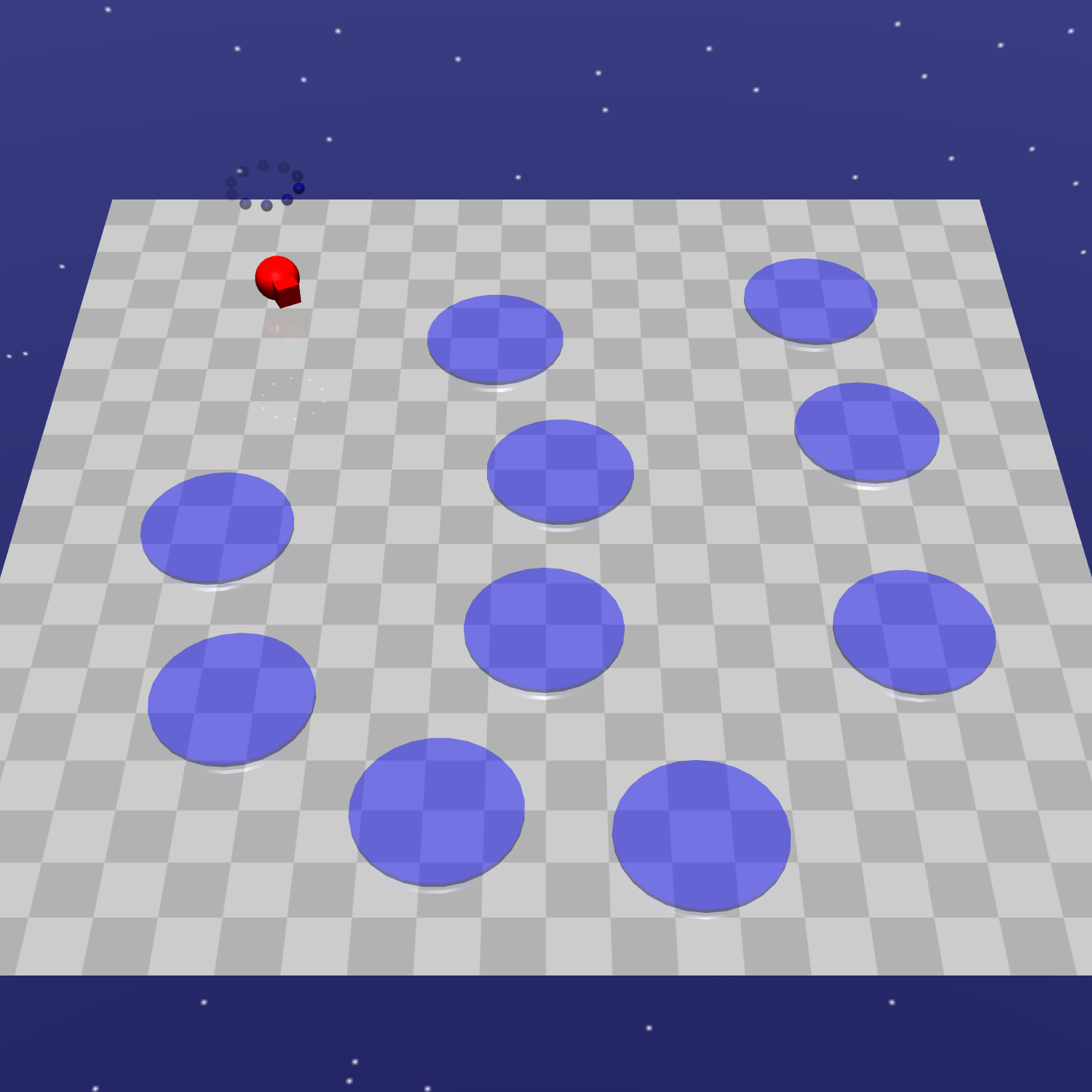}
        \caption{Hazardous areas}
        \label{fig:hazards}
    \end{subfigure}
    \hfill
    \begin{subfigure}{0.3\textwidth}
        \centering
        \includegraphics[width=\textwidth]{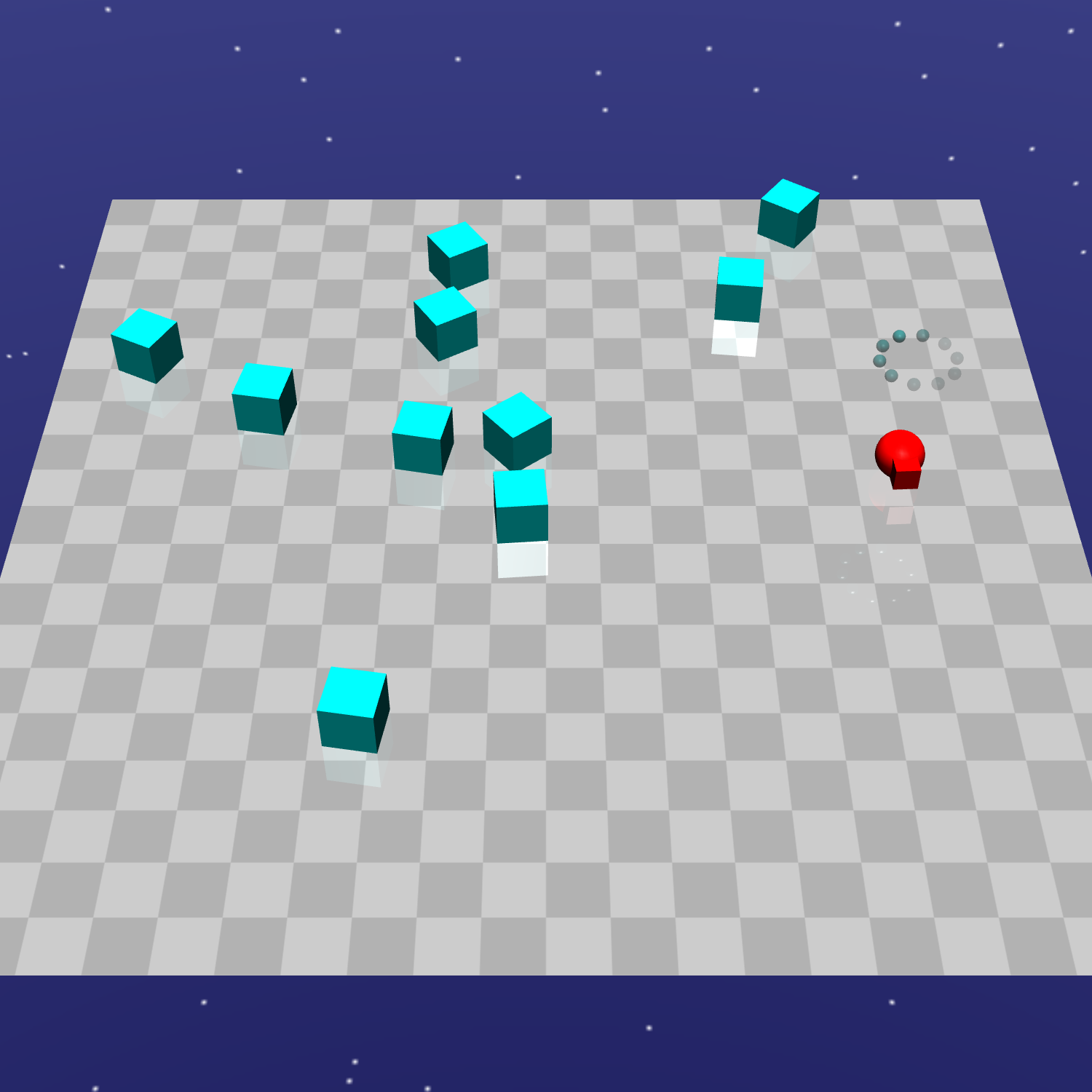}
        \caption{Vases}
        \label{fig:vases}
    \end{subfigure}
    \caption{Safety Gym tasks and constraints.}
    \label{fig:tasksandconstraints}
    \end{minipage}
\end{figure}
\end{document}